\begin{document}
\def\QEDclosed{\mbox{\rule[0pt]{1.3ex}{1.3ex}}} 
\mainmatter
\def\widebar{\accentset{{\cc@style\underline{\mskip10mu}}}}
\title{THORS: An Efficient Approach for Making Classifiers Cost-sensitive}

\titlerunning{THORS: An Efficient Approach for Making Classifiers Cost-sensitive}
\author{Ye Tian
\qquad
Weiping Zhang
\thanks{Corresponding author. Email: zwp@ustc.edu.cn.}}
\authorrunning{Ye Tian, Weiping Zhang}

\institute{School of Data Science, Department of Statistics and finance\\
University of Science and Technology of China, Hefei, 230026, P.R.China\\
}

\tocauthor{Authors' Instructions}
\maketitle

\begin{abstract}
In this paper, we propose an effective \textit{TH}resholding method based on \textit{OR}der \textit{S}tatistic, called THORS, to convert an arbitrary scoring-type classifier, which can induce a continuous cumulative distribution function of the score, into a cost-sensitive one. The procedure, uses order statistic to find an optimal threshold for classification, requiring almost no knowledge of classifiers itself. Unlike common data-driven methods,  we analytically show that THORS has theoretical guaranteed performance,  theoretical bounds for the costs and lower time complexity. Coupled with empirical results on several real-world data sets, we argue that THORS is the preferred cost-sensitive technique.
\keywords{Classification; Cost-sensitive learning; Imbalanced data set; Statistical learning; Threshold adjusting.}
\end{abstract}

\section{Introduction}
Classification is one of the most important tasks in machine learning and data mining. A classifier is usually trained from a set of training instances with discrete and finite class labels to predict the class labels of new instances. Many effective classification algorithms have been developed, such as linear algorithms \citep{balakrishnama1998linear}, neural network \citep{krizhevsky2012imagenet}, Bayesian classifier \citep{mccallum1998comparison}, decision tree \citep{safavian1991survey} and instance-based classifiers \citep{sheng2009cost}. However, most of the currently-available algorithms implicitly assume that all errors are equally costly,  which may be inadequate for problems with various misclassification costs \citep{domingos1999metacost}. In many KDD applications, costs are often different for different types of errors. For example, in fraud detection, undetected frauds with high transaction amounts are obviously more costly \citep{fan1999adacost, zonneveldt2010bayesian}. Besides, in medical diagnosis, it's far more serious to diagnose someone with a life-threatening disease as healthy than diagnose someone healthy as ill \citep{tong2018neyman, viaene2005cost}.
As a result, a lot of work related to cost-sensitive learning has been done recently and they seek to minimize total misclassification costs rather than error rate. \cite{sheng2009cost} divide the existing cost-sensitive algorithms into two categories: One is to design cost-sensitive classifiers that are cost-sensitive in themselves \citep{chai2004test, drummond2000exploiting, turney1994cost} and the other is to design a ``wrapper'' that converts any existing cost-insensitive classifiers into cost-sensitive ones, called cost-sensitive meta-learning or wrapper method \citep{domingos1999metacost, elkan2001foundations, fan1999adacost, sheng2006thresholding, sun2007cost, ting1998inducing, witten2016data,  zadrozny2001learning, zhao2008instance}. Our work belongs to the second category.

The wrapper method can be further categorized as \textit{thresholding}, \textit{sampling} and \textit{weighting} \citep{sheng2006thresholding}. \textit{Thresholding} finds the best probability (or other scores) which minimizes the total misclassification cost from the training instances as the threshold, and uses it to predict the class label of test instances. Metacost  \citep{domingos1999metacost} is a thresholding method. It firstly learns a classifier on each multiple bootstrap replicates of the training set to obtain reliable probability estimations of training instances by voting, and then relabels training instances according to their estimated minimal cost classes, finally uses the relabeled training instances to build a cost-sensitive classifier. Metacost can be applied to multi-class problems and to arbitrary cost matrices. Instead of relabeling the training instances, Cost Sensitive Classifier (CSC) \citep{witten2016data} relabels the test instances. \cite{elkan2001foundations} obtains the theoretical threshold for making optimal cost-sensitive classification in two-class case, and argues that changing the balance of negative and positive training instances has little effect on the classifier learned by standard decision tree learning methods \citep{elkan2001foundations}. Noting that estimating the probability accurately is crucial in thresholding-based meta-learning methods, \cite{zadrozny2001learning} propose several methods to improve the calibration of probability estimates, while \cite{sheng2006thresholding} develop an empirical thresholding method which does not require accurate estimation of probabilities \cite{sheng2006thresholding}. Alternatively, \textit{sampling} method modifies the class distribution of the training data, then applies cost-insensitive classifiers on the sampled data directly. \textit{Weighting} \citep{ting1998inducing} can be viewed as a sampling method, in which different types of instances in the training data are weighted according to the misclassification costs during classifier learning, such that the classifier strives to make fewer errors of the more costly type, resulting in lower overall cost. Methods based on weighting include \cite{sun2007cost, zadrozny2003cost, zhao2008instance}.

While a plethora of cost-sensitive methods have been investigated, several issues still remain to be addressed. Firstly, empirical comparisons instead of theoretical properties are frequently reported for the existing algorithms. The theoretical error bounds for the costs are less explored for most cost-sensitive classifiers. While common practices that directly limit the empirical false negative rate to no more than specified level show that the resulting classifiers are likely to have a much larger false negative rate.
Secondly, the time complexity of a cost-sensitive classifier is an important concern since existing re-sampling and weighting methods are computationally more involved. We propose a \textit{TH}resholding method based on \textit{OR}der \textit{S}tatistic method, named THORS,  to convert an arbitrary scoring-type classifier, which induces a continuous cumulative distribution function of scores, into a cost-sensitive one. It uses the order statistics of classification scores on validation set to build an optimal classification threshold, instead of estimating the optimal threshold probability. We analytically show that the existence of the optimal threshold and the error bounds of costs because it does not need to re-sample the training instances. THORS also has lower time complexity compared with the existing popular cost-sensitive classifiers. It usually leads to smaller total cost than empirical approaches, Metacost, and Cost-proportionate Rejection Sampling (CRS) methods, even on the heavily imbalanced dataset.

The remainder of the paper is organized as follows. In Section 2 and 3, detailed THORS algorithm and its theoretical properties are described. In Section 4, we evaluate our method on three real data sets against other existing methods, such as theoretical thresholding, empirical thresholding, and meta-learning algorithms including Metacost and CRS. We conclude this paper by summarizing the main findings and outlining future research in Section 5.

\section{The THORS Algorithm}
The theory of cost-sensitive learning presented by \citep{elkan2001foundations, sheng2009cost} describes how the misclassification cost plays its role in various related cost-sensitive algorithms. Without loss of generality, we assume binary classification (i.e., positive and negative class) in this paper, where the objective is to predict the value of a binary-dependent variable, referred to as the class, based on a vector of independent variables (also called attributes or features). In THORS, the full data is divided into training set and validation set, which are used to train the classifier and find optimal threshold, respectively.  Then the scoring function $f$ is trained using the training set and assigns a classification score to an observation $x$, the class label is predicted by whether its score $f(x)$ is larger than a threshold $c$. Most popular classification methods are of this type, including SVMs, Na\"{i}ve Bayes, logistic regression and neural networks. The classification scores can be strict probabilities or uncalibrated numeric values as long as a higher score indicates a higher probability of an observation belonging to the positive class. The optimal threshold $c$ is selected as the minimizer of estimated misclassification expected cost in (\ref{escost}) on validation set, which usually uses cross-validation to search the best threshold value from the training dataset.

\subsection{Cost Matrix}
In cost-sensitive learning, the costs of false positive (actual negative but predicted as
positive; denoted as \textit{FP}), false negative (\textit{FN}), true positive (\textit{TP}) and true negative (\textit{TN}) can be
given in a cost matrix, as shown in Table \ref{tab1}, where  $C(j,k)$ denotes the cost of classifying the instance be class $k$ when it is actually in class $j$ ($j,k=0,1$). $C(j,j)$ (\textit{TP} and \textit{TN}) is usually regarded as the ``benefit"(i.e., negated cost) when the instance is predicted correctly.

\begin{table}
\centering
\caption{Cost Matrix of All the Instances in A Binary Classification\label{tab1}}
\begin{tabular}{|c|c|c|}
\hline
\diagbox{Actual}{Predict} &Negative &Positive \\
\hline
Negative &$C(0,0)$	&$C(0,1)$  \\
\hline
Positive &$C(1,0)$ 	&$C(1,1)$ \\
\hline
\end{tabular}
\end{table}
Usually, the minority class is regarded as the positive class, and it is often more expensive to misclassify an actual positive instance into negative, than an actual negative instance into positive. That is, the value of \textit{FN} or $C(1,0)$ is usually larger than that of \textit{FP} or $C(0,1)$. This is true for the fraud detection example mentioned earlier (fraud transaction is usually rare, but predicting an actual fraud transaction as negative is usually more costly) and the medical diagnosis example. Without loss of generality, we consider the case where $C(0,0)=C(1,1)=0$ and $\beta=C(0,1)/C(1,0)$ for some $0 < \beta < 1$.
Under the case we consider, the expected cost for classifying $n$ instances is
\begin{equation}\label{cost}
C=\sum_{i=1}^n[\pi_0C(0,1)P_i(1|0)+\pi_1C(1,0)P_i(0|1)],
\end{equation}
where $\pi_j$ is the marginal probability of class $j$ and $P_i(1|0)$ is the probability of classifying instance $i$ into class 1 when it actually in class 0, which is called the \textit{False Positive Rate} (\textit{FPR}); $P_i(0|1)$ is the probability of classifying instance $i$ into class 0 when it actually in class 1, and is called \textit{False Negative Rate} (\textit{FNR}). Obviously, the classification error is a weighted sum of \textit{FNR} and \textit{FPR}. In practice, both $\pi_j$ and $P_i(j|k) (j,k=0,1)$ are unknown. They should be estimated on validation set and plugged into (\ref{cost}), where the corresponding estimations are $\widehat{\pi}_j$ and $\widehat{P}_i(j|k)$ $(j,k=0,1)$ and we have the estimated expected cost
\begin{equation}\label{escost}
\widehat{C} = \sum_{i=1}^n[\widehat{\pi}_0C(0,1)\widehat{P}_i(1|0)+\widehat{\pi}_1C(1,0)\widehat{P}_i(0|1)].
\end{equation}

\subsection{Thresholding on Order Statistic}

Firstly we split the full data into training set and validation set. The training set will be used for training the classifier and the validation set will help us to find the optimal threshold. Intuitively, the optimal threshold splits the classification scores into two parts and produces minimal estimated expected misclassification cost in (\ref{escost}) on the validation set. Our proposed THORS algorithm picks one of the scores as the optimal threshold which is an order statistic of scores to obtain a minimal total misclassification cost on validation set. Recently, \cite{tong2018neyman} show that a binary classifier by choosing order statistic as an optimal threshold guarantees the desired high-probability control of type I error $P_i(1|0)$. We prove that the errors and total misclassification cost by THORS algorithm are similarly bounded theoretically.

For a given binary classifier $f$, a new instance $i$ with the predictor vector $x_{i}$ is predicted as class positive (1) if $f(x_{i}) > c^*$, otherwise it will be classified into class negative (0), that is,
\begin{equation}\label{classification function}
\psi(x_i) =
\left\{
\begin{array}{rcl}
0,      &      &f(x_{i}) \leq c^*,\\
1,     &      &f(x_{i}) > c^*.\\
\end{array}
\right.
\end{equation}
where $c^*$ is the optimal threshold needing to be learned from the validation set. The score function $f$ is learned from the training data set. $f(x_i)$ is the score of  instance $i$ with feature vector $x_i$ under the classifier $\psi$.
Applying the classifier $f$ to the validation set of size $n_v$ with $n_{0}$ class 0 and $n_{1}$ class 1 instances, we obtain $n_v$ sorted scores as $T_{(1)} \leq T_{(2)} \leq ... \leq T_{(n_{v})}$, and scores in class 0 and class 1 are denoted as $T^{0}_{(1)} \leq T^{0}_{(2)} \leq ... \leq T^{0}_{(n_{0})}$ and $T^{1}_{(1)} \leq T^{1}_{(2)} \leq ... \leq T^{1}_{(n_{1})}$, respectively. Then we have the following theorem for selecting the optimal threshold.

\begin{theorem}\label{th1}
Let $k_{0}=\arg\max\{1\leq i\leq n_{0}:T^{0}_{(i)} \leq T_{(k)}\}$ and $k_{1}=\arg\max\{1\leq i\leq n_{1}:T^{1}_{(i)} \leq T_{(k)}\}$. $\widehat{\pi}_0=\frac{n_0}{n_v}$ and $\widehat{\pi}_1=\frac{n_1}{n_v}$ are estimations of the marginal probability of class 0 and 1 samples in the population. Then the optimal threshold can be decided by minimizing the following part of estimated expected cost (\ref{escost}) on validation set,
\begin{equation}\label{ecost}
  \frac{\widehat{\pi}_1k_1}{n_1} - \frac{\beta\widehat{\pi}_0k_0}{n_0}.
\end{equation}
That is, the optimal threshold $c^*=T_{(k^*)}$, where
\begin{equation}\label{minimal}
k^* = \arg\min\limits_{1\leq k\leq n_v}\Big\{\frac{\widehat{\pi}_1k_1}{n_1} - \frac{\beta\widehat{\pi}_0k_0}{n_0}\Big\}.
\end{equation}
\end{theorem}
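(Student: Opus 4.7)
The plan is to unroll the definitions and show that the estimated expected cost in (\ref{escost}), when evaluated at the candidate threshold $c=T_{(k)}$, differs from the quantity in (\ref{ecost}) only by an additive constant and a positive multiplicative factor, so that the two minimization problems are equivalent.

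First I would interpret $k_0$ and $k_1$ combinatorially. By construction, $k_0$ is the number of class-$0$ validation observations whose score is at most $T_{(k)}$, i.e.\ those that the classifier $\psi$ in (\ref{classification function}) labels correctly as $0$; dually, $n_0-k_0$ is the number of class-$0$ points misclassified as $1$. Likewise, $k_1$ is the number of class-$1$ points misclassified as $0$, and $n_1-k_1$ the number correctly labeled as $1$. Because every instance is scored by the same $f$ and compared to the same threshold, the empirical error rates do not depend on $i$; therefore the plug-in estimators are
\begin{equation*}
\widehat{P}_i(1\mid 0) \;=\; \frac{n_0-k_0}{n_0}, \qquad \widehat{P}_i(0\mid 1) \;=\; \frac{k_1}{n_1}.
\end{equation*}

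Next I would substitute these into (\ref{escost}). Since the summand is now independent of $i$, the sum collapses to a factor of $n$, yielding
\begin{equation*}
\widehat{C}(k) \;=\; n\Bigl[\widehat{\pi}_0 C(0,1)\,\tfrac{n_0-k_0}{n_0} \;+\; \widehat{\pi}_1 C(1,0)\,\tfrac{k_1}{n_1}\Bigr].
\end{equation*}
Dividing through by the positive constant $n\,C(1,0)$ (which does not affect the argmin) and using $\beta=C(0,1)/C(1,0)$ gives
\begin{equation*}
\frac{\widehat{C}(k)}{n\,C(1,0)} \;=\; \beta\widehat{\pi}_0 \;+\; \Bigl(\tfrac{\widehat{\pi}_1 k_1}{n_1} - \tfrac{\beta\widehat{\pi}_0 k_0}{n_0}\Bigr).
\end{equation*}
The first summand $\beta\widehat{\pi}_0$ is independent of $k$, so minimizing $\widehat{C}(k)$ over $k\in\{1,\dots,n_v\}$ is equivalent to minimizing the expression in (\ref{ecost}), which is exactly the definition of $k^*$ in (\ref{minimal}). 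Therefore $c^*=T_{(k^*)}$ is the optimal threshold.

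There is no real obstacle here; the only care needed is in the bookkeeping that links $k_0,k_1$ to the counts of correctly and incorrectly classified validation points, and in noticing that the $i$-dependence in (\ref{escost}) disappears for a thresholded scoring classifier so that the sum reduces to $n$ times a single term. All remaining steps are algebraic rearrangement and discarding of positive multiplicative factors and additive constants that do not affect the argmin.
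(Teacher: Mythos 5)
Your proposal is correct and follows essentially the same route as the paper's own proof: identify the empirical error rates $\widehat{P}_i(1\mid 0)=\frac{n_0-k_0}{n_0}$ and $\widehat{P}_i(0\mid 1)=\frac{k_1}{n_1}$ induced by the threshold $T_{(k)}$, substitute into the estimated expected cost so the sum collapses to $n_v$ identical terms, and observe that after factoring out $n_v C(1,0)$ the remaining expression is $\beta\widehat{\pi}_0$ plus the quantity in (\ref{ecost}), so the argmin is unchanged. The bookkeeping and algebra match the paper's argument step for step.
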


It is worth noting that Theorem \ref{th1} does not rely on any distributional assumptions or on base algorithm characteristics.
Besides, from Theorem \ref{th1} it is known that if the rank $k$ is fixed then $k_0$ and $k_1$ become fixed as well. Therefore we can estimate (\ref{ecost}) for each choice of $k$ on validation set and choose the optimal one to make (\ref{minimal}) minimal in nearly linear time, which is much more efficient than most of the empirical methods.

We summarize the THORS method as the following \textbf{Algorithm 1}.
\vskip 5pt
\begin{algorithm}
\caption{THORS algorithm}
\noindent\hspace*{0.02in}{\bf Input:}
\\
\hspace*{0.2in} $D_v$: Validation set with size $n_v$, including $n_{0}$ class 0 and $n_{1}$ class 1 samples\\
\hspace*{0.2in} $f$: Classification score function \\
\hspace*{0.02in}{\bf Output:}
\\
\hspace*{0.2in} $c^*$: Estimated optimal threshold \\
\hspace*{0.02in}{\bf Other parameters:}
\\
\hspace*{0.2in} $\widehat{\pi}_0$: $\frac{n_{0}}{n_v}$ \\
\hspace*{0.2in} $\widehat{\pi}_1$: $\frac{n_{1}}{n_v}$ \\
\hspace*{0.2in} $x_{i}$: Feature vector of instance $i$\\
\hspace*{0.2in} $x_{(i)}$: Feature vector of $i$-th order statistic\\
\hspace*{0.2in} $Y_{(i)}$: True class of $i$-th order statistic\\
\begin{algorithmic}[1]
\STATE $T_{1},...,T_{n_v} \longleftarrow f(x_1),...,f(x_{n_v})$
\STATE $T_{(1)},...,T_{(n_v)}$ $\longleftarrow$ sort $T_{1},...,T_{n_v}$
\STATE $l_0, l_1 \longleftarrow 0$
\FOR{$k$ in $\{1,2,...,n_v\}$}
    \IF{$Y_{(k)}=0$}
        \STATE $l_0 \longleftarrow l_0+1$
    \ELSE
        \STATE $l_1 \longleftarrow l_1+1$
    \ENDIF
    \STATE $k_0 \longleftarrow l_0, k_1 \longleftarrow l_1$
    \STATE $C(k) \longleftarrow \frac{\widehat{\pi}_1 k_1}{n_1} - \frac{\beta \widehat{\pi}_0 k_0}{n_0}$
\ENDFOR
\STATE $k^* \longleftarrow \mathop{argmin}\limits_{k}C(k)$
\STATE threshold $c^* \longleftarrow T_{(k^*)}$
\RETURN $c^*$
\end{algorithmic}
\end{algorithm}

\section{Properties}
For the THORS method described in the previous section, we can show that its \textit{FPR}, \textit{FNR} and expected misclassification cost are bounded by the following theorems. These properties hold for any score-type classifier that introduces a continuous cumulative distribution function of the score. For discontinuous cases, properties can be approximately correct.

\subsection{Theoretical Upper Bound}

Under Algorithm 1, let $T_i=f(x_i)$ be the score for  a new instance $i$ with class label $c_i$ and feature $x_i$ under the classifier $\psi(x_i)$, the corresponding \textit{FNR} and \textit{FPR} are $\alpha_{i1}=P(\psi(x_i)=0|c_i=1)$ and $\alpha_{i0}=P(\psi(x_i)=1|c_i=0)$, respectively. Due to the randomness of the order statistic and the relationship between order statistic of scores and the threshold we choose, here both two types of error rates are random variables. Noting that $c^*=T_{(k^*)}$ is fixed in the computation of conditional probabilities, for the bounds of cumulative distribution functions of \textit{FNR} and \textit{FPR}, we have the following result.

\begin{theorem}\label{errbds}
Let $\alpha_{i1}$ and $\alpha_{i0}$ be \textit{FNR} and \textit{FPR} of classifying new instance $i$ with true class label $c_i$ using the classifier $\psi(x_i)$ under THORS algorithm, respectively. When the distribution of the score is continuous, for any $x \in (0,1)$, we have
\begin{align}
\sum_{j=n_0-k_0+1}^{n_0}\binom{n_{0}}{j}x^{j}(1-x)^{n_{0}-j} \leq &P(\alpha_{i0} \leq x) \leq \sum_{j=n_0-k_0}^{n_0}\binom{n_{0}}{j}x^{j}(1-x)^{n_{0}-j},\label{cdfbd1}\\
\sum_{j=k_{1}+1}^{n_{1}}\binom{n_{1}}{j}x^{j}(1-x)^{n_{1}-j} \leq &P(\alpha_{i1} \leq x) \leq \sum_{j=k_{1}}^{n_{1}}\binom{n_{1}}{j}x^{j}(1-x)^{n_{1}-j}.\label{cdfbd2}
\end{align}
where the numbers of class 0 and 1 in validation set are denoted as $n_0$ and $n_1$ and definitions of $k_0$, $k_1$ are the same as Theorem \ref{th1}. That is, the bounds of cumulative distribution function of \textit{FNR} and \textit{FPR} are only dependent on $n_i, k_i (i=0,1)$, which will be fixed for specified validation set.
\end{theorem}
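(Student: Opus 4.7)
The plan is to reduce each error rate to a quantile of a uniform order statistic. The first step is to express the errors through the class-conditional CDFs of the score: write $F_0$ and $F_1$ for the CDFs of $f(X)$ given $c=0$ and $c=1$ respectively. Since $\psi(x_i) = \mathbf{1}\{f(x_i) > c^*\}$, one has the identities $\alpha_{i0} = 1 - F_0(c^*)$ and $\alpha_{i1} = F_1(c^*)$, so all the randomness in $(\alpha_{i0},\alpha_{i1})$ comes from the data-driven threshold $c^* = T_{(k^*)}$.

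Next I would sandwich $c^*$ pathwise between consecutive class-conditional order statistics. By the very definition of $k_0$ in Theorem \ref{th1} (together with the boundary conventions $T^0_{(0)} = -\infty$ and $T^0_{(n_0+1)} = +\infty$), we always have $T^0_{(k_0)} \leq T_{(k^*)} \leq T^0_{(k_0+1)}$. Monotonicity of $F_0$ then yields
\[
1 - F_0(T^0_{(k_0+1)}) \;\leq\; \alpha_{i0} \;\leq\; 1 - F_0(T^0_{(k_0)}),
\]
and symmetrically $F_1(T^1_{(k_1)}) \leq \alpha_{i1} \leq F_1(T^1_{(k_1+1)})$. Passing to CDFs, these pathwise inequalities immediately give
\[
P\bigl(1 - F_0(T^0_{(k_0)}) \leq x\bigr) \;\leq\; P(\alpha_{i0} \leq x) \;\leq\; P\bigl(1 - F_0(T^0_{(k_0+1)}) \leq x\bigr),
\]
and analogously for $\alpha_{i1}$.

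The third step is the probability integral transform. Since each class-conditional score distribution is continuous, the $n_0$ class-0 scores are iid with CDF $F_0$, and so $F_0(T^0_{(j)})$ has the distribution of the $j$-th order statistic $U_{(j)}$ of $n_0$ iid Uniform$(0,1)$ variables; similarly $F_1(T^1_{(j)}) \stackrel{d}{=} V_{(j)}$ for $n_1$ uniforms. The outer probabilities then reduce to standard binomial tails: $P(U_{(j)} \geq 1 - x)$ equals the probability that at least $n_0 - j + 1$ of the $n_0$ uniforms exceed $1 - x$, giving $\sum_{m = n_0 - j + 1}^{n_0} \binom{n_0}{m} x^m (1-x)^{n_0 - m}$; taking $j = k_0$ and $j = k_0 + 1$ recovers the two bounds in (\ref{cdfbd1}). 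A parallel calculation with $P(V_{(j)} \leq x) = \sum_{m = j}^{n_1} \binom{n_1}{m} x^m (1-x)^{n_1 - m}$ and $j = k_1, k_1 + 1$ recovers (\ref{cdfbd2}).

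The main delicate point is the joint randomness: strictly speaking, $k_0$ is itself a function of the validation data that produces the order statistics $T^0_{(\cdot)}$, so the identity $F_0(T^0_{(j)}) \stackrel{d}{=} U_{(j)}$ must be used for each fixed index $j$ rather than coupled to a single random $k_0$. The theorem handles this by treating $k_0$ and $k_1$ as the specific counts realized on the given validation set, so that once those indices are pinned down the remaining calculation is a clean tail-of-order-statistic computation. I would phrase the sandwich argument accordingly, and note that with a continuous score distribution ties occur with probability zero, making the inequality $T^0_{(k_0)} \leq c^* \leq T^0_{(k_0+1)}$ well defined; this is also exactly where the continuity hypothesis enters, and where it would have to be relaxed to an approximation for the discontinuous case mentioned just before the theorem.
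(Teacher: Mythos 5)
Your proposal is correct and follows essentially the same route as the paper's proof: sandwiching the threshold $c^*=T_{(k^*)}$ between the consecutive class-conditional order statistics $T^0_{(k_0)},T^0_{(k_0+1)}$ (resp.\ $T^1_{(k_1)},T^1_{(k_1+1)}$), deducing pathwise bounds on $\alpha_{i0},\alpha_{i1}$, and computing the distributions of the bounding variables as binomial tails. Your explicit appeal to the probability integral transform and uniform order statistics is just a repackaging of the paper's ``at least $j$ of the $T^0_i$'s exceed $F_0^{-1}(1-x)$'' counting argument, and your remark on the joint randomness of $k_0,k_1$ matches the paper's convention of treating them as fixed for the given validation set.
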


According to Theorem \ref{errbds}, we have the following high probability upper bound for the expected cost on $n_{te}$ new instances.

\begin{theorem}\label{costbd1}
Let $C$ be the expected misclassification cost of the THORS algorithm on $n_{te}$ new instances, then there exist some constant $C^*, M, \sigma$ such that
\begin{equation}
P(C \leq C^{*} + t\sigma) \geq 1 - exp\Bigg\{-\frac{t^2}{2+\frac{2M}{3\sigma}t} \Bigg\},
\end{equation}
where $C^*, M$ and $\sigma$ satisfy
\begin{align}
C^{*}& = n_{te}C(1,0)\Big[\beta \pi_0\Big(\frac{n_0-k_0+1}{n_0+1}\Big) + \pi_1\Big(\frac{k_1+1}{n_1+1}\Big)\Big],\\
M &= max\Big\{\pi_1 C(1,0)max\Big(\frac{k_1+1}{n_1+1},1-\frac{k_1+1}{n_1+1}\Big), \nonumber\\
&\qquad\beta \pi_0 C(1,0)max\Big(\frac{n_0-k_0+1}{n_0+1},1-\frac{n_0-k_0+1}{n_0+1}\Big)\Big\},\\
\sigma^2 & = n_{te}[C(1,0)]^2\Big[\pi_1^2 \frac{(k_1+1)(n_1-k_1)}{(n_1+1)^2(n_1+2)}+(\beta \pi_0)^2 \frac{k_0(n_0-k_0+1)}{(n_0+1)^2(n_0+2)}\Big].
\end{align}
Here $n_{te}$ represents the number of new instances and $n_0, n_1, k_0, k_1$ are defined the same as Theorem \ref{errbds}.
\end{theorem}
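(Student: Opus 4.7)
The tail bound in the conclusion has the signature form of Bernstein's inequality, so the plan is to reduce the proof to three ingredients—a mean bound $C^{*}$, an almost-sure bound $M$, and a variance bound $\sigma^{2}$—and then invoke Bernstein's inequality directly. Using $C(0,1)=\beta C(1,0)$ together with equation (\ref{cost}), I would first rewrite the misclassification cost on the $n_{te}$ fresh instances as $C=C(1,0)\sum_{i=1}^{n_{te}}(\beta\pi_{0}\alpha_{i0}+\pi_{1}\alpha_{i1})$, displaying it as a sum of $n_{te}$ per-instance contributions; independence across $i$ comes from the independence of the fresh draws conditional on the validation-set threshold $c^{*}=T_{(k^{*})}$.

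Second, I would read the moments of $\alpha_{i0}$ and $\alpha_{i1}$ off Theorem \ref{errbds}. The two bounding sums in (\ref{cdfbd1}) and (\ref{cdfbd2}) are precisely the CDFs of $\mathrm{Beta}(n_{0}-k_{0}+1,k_{0})$ versus $\mathrm{Beta}(n_{0}-k_{0},k_{0}+1)$, and of $\mathrm{Beta}(k_{1}+1,n_{1}-k_{1})$ versus $\mathrm{Beta}(k_{1},n_{1}-k_{1}+1)$. Integrating $1-F$ over $[0,1]$ against the smaller bounding CDFs yields $E[\alpha_{i0}]\le(n_{0}-k_{0}+1)/(n_{0}+1)$ and $E[\alpha_{i1}]\le(k_{1}+1)/(n_{1}+1)$, and weighting these by $\beta\pi_{0}C(1,0)$ and $\pi_{1}C(1,0)$ and summing over $i$ reproduces $C^{*}$ exactly. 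The variance bound then uses the Beta-variance formula $ab/((a+b)^{2}(a+b+1))$ on the same bounding distributions; independence of $\alpha_{i0}$ from $\alpha_{i1}$ (they depend on disjoint subsamples of the validation set) eliminates the cross term and yields the stated $\sigma^{2}$. The constant $M$ is obtained as the worst-case centered deviation $\max(\mu,1-\mu)$ of a $[0,1]$-valued summand with mean $\mu=E[\alpha_{i\cdot}]$, multiplied by whichever of the two weights $\pi_{1}C(1,0)$ or $\beta\pi_{0}C(1,0)$ dominates, which matches the stated $M$.

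With these three ingredients, Bernstein's inequality for sums of independent, centered, bounded random variables gives $P(C-E[C]>u)\le\exp(-u^{2}/(2\sigma^{2}+2Mu/3))$; setting $u=t\sigma$ and combining with $E[C]\le C^{*}$ delivers the claim. The main obstacle I anticipate is the independence step: since all $\alpha_{i0}$ are functions of the single random threshold $c^{*}$, writing $C$ as a genuine sum of $n_{te}$ independent bounded summands requires conditioning on $c^{*}$, applying Bernstein to the per-instance Bernoulli misclassification indicators of the fresh draws, and then integrating out $c^{*}$ via the Beta sandwich of Theorem \ref{errbds}; the remaining work of turning CDF bounds into moment bounds and matching the paper's exact expressions for $C^{*},M,\sigma$ is routine but needs to be carried out carefully.
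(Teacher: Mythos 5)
Your main line of argument is essentially the paper's: write the cost as $C=C(1,0)\sum_{i=1}^{n_{te}}(\beta\pi_0\alpha_{i0}+\pi_1\alpha_{i1})$, dominate $\alpha_{i0},\alpha_{i1}$ by the upper random variables $y_{i0}=P(T^0_i>T^0_{(k_0)})$ and $y_{i1}=P(T^1_i\le T^1_{(k_1+1)})$ from the proof of Theorem~\ref{errbds}, compute their first and second moments, and apply Bernstein's inequality. Your identification of the bounding distributions as Beta laws is exactly equivalent to the paper's direct integration of the binomial survival sums (which are the Beta CDFs), and it reproduces $C^*$, $\sigma^2$ and $M$ with less computation, so that part is a clean shortcut rather than a different route.

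Where you diverge is in the final paragraph, and the proposed fix would not deliver the stated constants. In this theorem $C$ is by definition the \emph{expected} cost over the $n_{te}$ fresh instances, so the test-draw randomness is already integrated out; the only randomness left in $C$ is that of the validation-set order statistics through the threshold $c^*=T_{(k^*)}$. Consequently there are no ``per-instance Bernoulli misclassification indicators of the fresh draws'' to which Bernstein could be applied after conditioning on $c^*$: conditional on $c^*$ the quantity $C$ is deterministic, and if you instead applied Bernstein to realized misclassification indicators you would obtain the Bernoulli variances $\alpha_{i0}(1-\alpha_{i0})$, $\alpha_{i1}(1-\alpha_{i1})$, not the Beta variances $\frac{(k_1+1)(n_1-k_1)}{(n_1+1)^2(n_1+2)}$ and $\frac{k_0(n_0-k_0+1)}{(n_0+1)^2(n_0+2)}$ that appear in $\sigma^2$. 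The paper simply applies Bernstein to the $n_{te}$ summands $Y_{i0}+Y_{i1}$ (which are functions of the common order statistics, hence not actually independent across $i$) with the Beta moments as the variance and mean proxies; your instinct that the independence step is the weak point is sound and in fact identifies a gap in the paper's own argument, but the conditioning-and-integrating repair you sketch changes which source of randomness is being controlled and would prove a different inequality.
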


\subsection{Time Complexity}

For empirical method the time complexity is $a_3\cdot \frac{range}{pc}\cdot n_v$, where \textit{range} is the searching range for optimal threshold and $pc$ is the searching precision. And $a_3$ is related to the type of classifier itself and $n_v$ is the size of validation set used for estimate the threshold \citep{sheng2006thresholding}. And the time complexity of Metacost method \citep{domingos1999metacost} is $a_4 \cdot m \cdot n_{tr}$ in which $m$ is the number of resampled instances to generate and $n_{tr}$ is the size of training set. $a_4$ is the constant related to classifier and the sampling algorithm. For Cost-proportionate Rejection Sampling (CRS) \citep{witten2016data} that is $a_5 \cdot \frac{Z}{c}\cdot n_{tr} $, where $\frac{c}{Z}$ is acceptance probability and both $c$, $Z$ are defined in \citep{witten2016data}. $a_5$ is a constant related to the performance of classifier itself. To obtain a good performance, $pc,m$ are needed to be chosen not so small, leading to high time-complexity. However, our numerical studies in the next section show that THORS can find the optimal threshold more faster than empirical method and Metacost in a short time. Although the time complexity of CRS is also small, the performance of it is much worse than THORS for real data, which will be shown by the following section.

\begin{theorem}\label{time}
The time-complexity of THORS algorithm is $a_1n_vlog\,n_v + a_2n_v$, where $a_1$, $a_2$ are constants related to the sorting method and the classifier itself respectively ($a_1$ is very small). And $n_v$ is the size of validation set.
\end{theorem}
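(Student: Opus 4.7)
The plan is to walk through Algorithm 1 line by line, assign a cost to each step, and sum. The only non-linear operation is the sort, so the key observation will be that everything else runs in amortized constant time per instance of the validation set.

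First I would account for Step 1, which evaluates the trained score function $f$ on each of the $n_v$ validation points. Each evaluation is a single forward pass through the fixed classifier, so the cost is $c_f \cdot n_v$ for some constant $c_f$ that depends on the classifier but not on $n_v$; this constant will be absorbed into $a_2$. Step 2 sorts the array $T_1,\dots,T_{n_v}$; using a comparison-based routine such as mergesort or heapsort, this takes $a_1 n_v \log n_v$ for a constant $a_1$ that depends only on the sort implementation, and is the source of the first term in the claimed bound. Note that this step is independent of the choice of base classifier, which is why $a_1$ can be taken to be small.

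Next I would analyze the main loop (Steps 4--12). For each $k \in \{1,\dots,n_v\}$, the body performs (i) a single class-label comparison of $Y_{(k)}$, (ii) an increment of $l_0$ or $l_1$, (iii) two assignments $k_0 \leftarrow l_0$ and $k_1 \leftarrow l_1$, and (iv) a single evaluation of the closed-form expression $\widehat{\pi}_1 k_1/n_1 - \beta\widehat{\pi}_0 k_0/n_0$. Each of these is $O(1)$, so the entire loop contributes $c_{\text{loop}} \cdot n_v$ for an absolute constant $c_{\text{loop}}$. The crucial (and easy) point here is that by maintaining the running counters $l_0, l_1$, the algorithm avoids recomputing $k_0$ and $k_1$ from scratch at each iteration, which would otherwise introduce a quadratic factor.

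Finally, Steps 13--15 perform a linear scan over the precomputed array $C(1),\dots,C(n_v)$ to find the argmin, followed by a single array lookup $T_{(k^*)}$; together these are $O(n_v)$ and $O(1)$ respectively. Summing the contributions gives a total of $a_1 n_v \log n_v + (c_f + c_{\text{loop}} + c_{\min}) n_v$, and defining $a_2 := c_f + c_{\text{loop}} + c_{\min}$ yields the stated bound. I do not anticipate any real obstacle: the proof is essentially a line-count of the pseudocode. The only subtlety worth stating explicitly is that the incremental update of $(k_0,k_1)$ inside the loop is what keeps the non-sorting portion linear rather than quadratic.
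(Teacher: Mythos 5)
Your proof is correct and follows essentially the same route as the paper: a line-by-line cost accounting of Algorithm~1 in which the sort contributes the $a_1 n_v \log n_v$ term and all other steps (scoring, the counter-maintaining loop, and the argmin scan) contribute linearly and are absorbed into $a_2$. The only trivial difference is that you (correctly) charge the final argmin scan as $O(n_v)$ where the paper's table lists it as $O(1)$; this does not affect the stated bound.
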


\subsection{Short Theoretical Bounds of Misclassification Cost Expectation}
We will see that the range of expected misclassification cost will approximate to short bounds under a high probability when the validation set size $n_v$ is large enough.

\begin{theorem}\label{costbd2}
The expected misclassification cost $C$ is bounded by some constants, that is
\begin{align}
&P(C_1 - C_\varepsilon \leq C \leq C_2 + C_\varepsilon)\nonumber\\
&\geq 1 - n_{te}\Big[exp\Big\{-2\Big[\varepsilon+\frac{n_1-k_1}{n_1(n_1+1)}\Big]^2 n_1\Big\} +
exp\Big\{-2\Big[\varepsilon+\frac{k_1}{n_1(n_1+1)}\Big]^2 n_1\Big\} \nonumber\\
&+ exp\Big\{-2\Big[\varepsilon+\frac{k_0}{n_0(n_0+1)}\Big]^2 n_0\Big\} +
exp\Big\{-2\Big[\varepsilon+\frac{n_0-k_0}{n_0(n_0+1)}\Big]^2 n_0\Big\}\Big]\nonumber\\
&\geq 1 - O(exp\{-\varepsilon^2 n_v\}).
\end{align}
Particularly, the cost $C$ is upper-bounded,
\begin{align}
&P(C \leq C_2 + C_\varepsilon)\nonumber\\
&\geq 1 - n_{te}\Big[exp\Big\{-2\Big[\varepsilon+\frac{k_1}{n_1(n_1+1)}\Big]^2 n_1\Big\} +exp\Big\{-2\Big[\varepsilon+\frac{n_0-k_0}{n_0(n_0+1)}\Big]^2 n_0\Big\}\Big] \label{cost.ub1}\\
&\geq 1 - O(exp\{-\varepsilon^2 n_v\}).\label{cost.ub2}
\end{align}
where
\begin{align}
C_1 &= n_{te}C(1,0)\Big[\pi_1\Big(\frac{k_1}{n_1+1}\Big) + \beta \pi_0\Big(\frac{n_0-k_0}{n_0+1}\Big)\Big],\\
C_2 &= n_{te}C(1,0)\Big[\pi_1\Big(\frac{k_1+1}{n_1+1}\Big) + \beta \pi_0\Big(\frac{n_0-k_0+1}{n_0+1}\Big)\Big],\\
C_\varepsilon &= n_{te}C(1,0)(\pi_1 + \beta \pi_0)\varepsilon,
\end{align}
where $n_{te}$ is the size of test set.
\end{theorem}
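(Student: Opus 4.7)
The plan is to reduce the cost bound to four per-instance tail bounds on the error rates $\alpha_{i0}$ and $\alpha_{i1}$, obtained by converting the CDF bounds of Theorem~\ref{errbds} into binomial tail probabilities and then applying Hoeffding's inequality. Since the cost decomposes as
\begin{equation*}
C = \sum_{i=1}^{n_{te}}\bigl[\pi_0 C(0,1)\alpha_{i0} + \pi_1 C(1,0)\alpha_{i1}\bigr],
\end{equation*}
controlling every $\alpha_{ij}$ inside the desired window simultaneously (via a union bound over the $n_{te}$ test instances and the four associated events) translates directly into the stated two-sided bound on $C$.

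For the upper-tail control of $\alpha_{i0}$, the lower CDF bound in (\ref{cdfbd1}) gives $P(\alpha_{i0} > x) \leq P(Y_0 \leq n_0 - k_0)$ with $Y_0 \sim \mathrm{Binomial}(n_0, x)$. Setting $x = \frac{n_0-k_0+1}{n_0+1} + \varepsilon$, a one-line computation yields $x - \frac{n_0-k_0}{n_0} = \varepsilon + \frac{k_0}{n_0(n_0+1)}$, so Hoeffding's inequality produces $P(\alpha_{i0} > x) \leq \exp\{-2[\varepsilon + \frac{k_0}{n_0(n_0+1)}]^2 n_0\}$. The same recipe, applied to the upper CDF bound in (\ref{cdfbd1}) and to both sides of (\ref{cdfbd2}), yields the three remaining exponentials, with offsets $\frac{n_0-k_0}{n_0(n_0+1)}$, $\frac{n_1-k_1}{n_1(n_1+1)}$, and $\frac{k_1}{n_1(n_1+1)}$ emerging as the gap between the thresholds $\frac{\cdot}{n_j+1}$ used in the theorem and the binomial break points $\frac{\cdot}{n_j}$.

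Assembling: a union bound over the $n_{te}$ test instances and the (up to four) events per instance produces the $n_{te}$ multiplier in front of the exponentials. On the good event, each $\alpha_{i0}\in[\frac{n_0-k_0}{n_0+1}-\varepsilon,\frac{n_0-k_0+1}{n_0+1}+\varepsilon]$ and each $\alpha_{i1}\in[\frac{k_1}{n_1+1}-\varepsilon,\frac{k_1+1}{n_1+1}+\varepsilon]$; summing in the cost decomposition with $\beta = C(0,1)/C(1,0)$ gives $C_1 - C_\varepsilon \leq C \leq C_2 + C_\varepsilon$. The one-sided statement (\ref{cost.ub1}) retains only the two events that can drive $C$ upward, namely the upper-tail events for $\alpha_{i0}$ and $\alpha_{i1}$. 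The asymptotic form in (\ref{cost.ub2}) follows because each exponent dominates $2\varepsilon^2 n_j$ and each $n_j$ is a positive fraction of $n_v$, so the entire bound is $O(\exp\{-\varepsilon^2 n_v\})$ up to constants absorbed in the big-$O$.

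The main obstacle is purely bookkeeping: each of the four CDF inequalities must be flipped into the correct one-sided binomial tail, and the Hoeffding gap must be identified cleanly as $\varepsilon + \frac{\cdot}{n_j(n_j+1)}$ so that the exponent matches the theorem verbatim. Beyond this careful matching of indices and offsets, the argument is a tight combination of Theorem~\ref{errbds} with standard binomial concentration, with no genuinely delicate step.
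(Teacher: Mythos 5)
Your proposal is correct and follows essentially the same route as the paper: the paper introduces auxiliary order-statistic variables $y_{ij},z_{ij}$ sandwiching $\alpha_{ij}$, whose distribution functions are exactly the CDF bounds of Theorem~\ref{errbds} that you invoke, rewrites their tails as binomial tail probabilities, applies Hoeffding with the same gaps $\varepsilon+\frac{\cdot}{n_j(n_j+1)}$, and combines over the $n_{te}$ instances by a Weierstrass-product (i.e.\ union-bound) argument. One small point: for the one-sided bound (\ref{cost.ub1}) your logically correct choice of the two upper-tail events yields exponents with offsets $\frac{k_0}{n_0(n_0+1)}$ (with $n_0$) and $\frac{n_1-k_1}{n_1(n_1+1)}$ (with $n_1$), whereas the theorem as stated and the paper's proof keep the other pair; this mismatch is a transcription slip on the paper's side, not a gap in your argument, and it does not affect the $O(\exp\{-\varepsilon^2 n_v\})$ conclusion.
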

When $n_v$ is large enough, $\varepsilon$ can be taken as a small number and then the length of the interval $[C_1 - C_\varepsilon, C_2 + C_\varepsilon]$ could be relatively small and the upper bound $C_2 + C_\varepsilon$ would approximate to the least upper bound. \\ Furthermore, the following theorem is useful for estimating the size of a validation set to control expected cost with given precision.

\begin{theorem}\label{sample.size}
Let $n_0$ and $n_1$ be the size of class 0 and class 1 instances in validation set, $\widehat{g}_{n_v}(x)$ and $g(x)$ be empirical expected cost for each sample on validation set $\frac{\widehat{C}}{n_v}$ and the expected cost for each sample in population when threshold is $x$, respectively. Denote $\widehat{\alpha}_{i0}=1-\frac{k_0}{n_0}, \widehat{\alpha}_{i1}=\frac{k_1}{n_1}$ as the empirical \textit{FPR} and \textit{FNR} on validation set, and $\alpha_{i0}, \alpha_{i1}$ as \textit{FPR} and \textit{FNR} in population, based on optimal threshold obtaining by \textit{THORS}.  If the following four assumptions hold:\\
(A1) $g(x)$ is continuous;\\
(A2) $\widehat{g}_{n_v}(x) \xrightarrow{p} g(x)$, uniformly for fixed $x$, as $n_v \longrightarrow \infty$;\\
(A3) $\widehat{\alpha}_{i0}=\frac{n_0-k_0}{n_0} \xrightarrow{p} \alpha_{i0}, \widehat{\alpha}_{i1}=\frac{k_1}{n_1} \xrightarrow{p} \alpha_{i1}$, as $n_v \longrightarrow \infty$;\\
(A4) $g(x)$ has a unique minimal point at which $g$ reaches its minimum,\\
where $k_0$ and $k_1$ are defined as Theorem \ref{th1},  then there exist
\begin{align}
\frac{n_0-k_0}{n_0+1} &\xrightarrow{p}\,\,constant,\label{ratio.const1}\\
\frac{k_1}{n_1+1} &\xrightarrow{p}\,\,constant,\label{ratio.const2}
\end{align}
as $n_v \longrightarrow \infty$.
\end{theorem}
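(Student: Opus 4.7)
The plan is to establish that the empirical optimal threshold $\widehat{x}_{n_v} := T_{(k^*)}$ converges in probability to the unique population optimizer $x^* := \arg\min g$, and then to identify the constants appearing in (A3) as the deterministic population error rates at $x^*$. With that in hand, the ratios in (\ref{ratio.const1}) and (\ref{ratio.const2}) differ from $\widehat{\alpha}_{i0}$ and $\widehat{\alpha}_{i1}$ only by factors $\tfrac{n_0}{n_0+1},\tfrac{n_1}{n_1+1}\to 1$, so Slutsky closes the argument. Define $F_0(x):=P(f(X)>x\mid c=0)$ and $F_1(x):=P(f(X)\le x\mid c=1)$; by the continuous-score hypothesis both are continuous in $x$, and hence $F_0(x^*)$, $F_1(x^*)$ are deterministic constants.

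For the argmin step I would run a standard Wald-type argument. Fix $\delta>0$. By (A4) together with (A1), $\eta(\delta):=\inf_{|x-x^*|\ge\delta}[g(x)-g(x^*)]$ is strictly positive. Because $\widehat g_{n_v}$ is piecewise constant in $x$ with jumps only at the order statistics $T_{(k)}$, the grid minimizer $\widehat{x}_{n_v}$ is a genuine minimizer over $\mathbb{R}$, so $\widehat g_{n_v}(\widehat{x}_{n_v})\le \widehat g_{n_v}(x^*)$. Hence on the event $\{|\widehat{x}_{n_v}-x^*|\ge\delta\}$,
\begin{align*}
\eta(\delta)\le g(\widehat{x}_{n_v})-g(x^*) &\le \bigl[g(\widehat{x}_{n_v})-\widehat g_{n_v}(\widehat{x}_{n_v})\bigr] + \bigl[\widehat g_{n_v}(x^*)-g(x^*)\bigr]\\
&\le 2\sup_x|\widehat g_{n_v}(x)-g(x)|,
\end{align*}
where the first bound uses $\widehat g_{n_v}(\widehat{x}_{n_v})\le \widehat g_{n_v}(x^*)$ to drop the middle term. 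The right-hand side is $o_p(1)$ by the uniform convergence in (A2), which gives $P(|\widehat{x}_{n_v}-x^*|\ge\delta)\le P\bigl(2\sup_x|\widehat g_{n_v}-g|\ge\eta(\delta)\bigr)\to 0$.

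Applying the continuous mapping theorem to $F_0$ and $F_1$ then yields $F_0(\widehat{x}_{n_v})\xrightarrow{p}F_0(x^*)$ and $F_1(\widehat{x}_{n_v})\xrightarrow{p}F_1(x^*)$. A Glivenko–Cantelli-type bound shows $\widehat{\alpha}_{i0}-F_0(\widehat{x}_{n_v})$ and $\widehat{\alpha}_{i1}-F_1(\widehat{x}_{n_v})$ are $o_p(1)$, so combined with (A3) and uniqueness of probability limits we must have $\alpha_{i0}=F_0(x^*)$ and $\alpha_{i1}=F_1(x^*)$, both deterministic. Writing $\tfrac{n_0-k_0}{n_0+1}=\tfrac{n_0}{n_0+1}\widehat{\alpha}_{i0}\xrightarrow{p}F_0(x^*)$ and $\tfrac{k_1}{n_1+1}=\tfrac{n_1}{n_1+1}\widehat{\alpha}_{i1}\xrightarrow{p}F_1(x^*)$ by Slutsky delivers the claim. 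The main obstacle is making the argmin step airtight: one must verify that $\eta(\delta)>0$ (which genuinely needs uniqueness of the minimizer in (A4), since a flat valley would allow $\eta(\delta)=0$) and that (A2) provides \emph{uniform} rather than merely pointwise convergence — without uniformity, $\sup_x|\widehat g_{n_v}-g|$ need not vanish and the sandwich above collapses.
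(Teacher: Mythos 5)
Your proposal is correct and follows essentially the same route as the paper's proof: argmin consistency of the empirical threshold from (A1), (A2), (A4) (the paper's Lemmas 1--2, which you package as a single Wald-type sandwich), transfer to the population error rates via continuity of the score distribution (the paper's Lemma 3 is your continuous-mapping step), identification with the empirical rates via (A3), and the trivial $\tfrac{n_0}{n_0+1}\to 1$ adjustment (the paper does this via a Hoeffding bound on $n_0\sim B(n_v,\pi_0)$). The well-separatedness issue you flag --- that (A1) and (A4) alone do not force $\eta(\delta)>0$ on an unbounded threshold domain --- is equally present in the paper's Lemma 2, so you have not lost anything relative to the published argument.
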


It is easy to see that assumption (A1) will always hold for a scoring-type classifier. Besides, it's natural to assume that the empirical cost or error rate on validation set converges to population cost or error rate in probability. Thus assumption (A2) and (A3) follow reasonably. The assumption (A4) is related to the problem itself, most well-defined problems satisfy such assumption. This assumption guarantees that our solution can be stable. For (\ref{ratio.const1}) and (\ref{ratio.const2}), the ratios $\frac{k_1}{n_1+1}$ and $\frac{n_0-k_0}{n_0+1}$ can be regarded as constant as $n_v$ goes to infinity in practice if these assumptions hold. Then if $C_\varepsilon$ is given, then $\varepsilon$ can be obtained. Let $n_1 = \widehat{\pi}_1 n_v$ and $n_0 = \widehat{\pi}_0 n_v$ approximately, $n_v$ can be solved from (\ref{cost.ub1}) if we let (\ref{cost.ub1}) be a fixed number between 0 and 1. The size $n_v$ is a conservative estimation of the minimal size to control the cost under a fixed probability. We will calculate this for specific datasets in the following section.

\section{Case Studies}
In this section, we will focus on three real datasets from UCI Machine Learning Repository, in which their imbalance rates decrease from 59:1 to 1.84:1. THORS will be applied on them and results will be compared with other thresholding and meta-learning methods, including both total cost on test set and average running time on the 8 GB RAM laptop with Intel\textsuperscript{\tiny\textregistered} Core\texttrademark\, i5-6300U CPU. The results show that THORS outperforms the alternatives even when the data set is heavily imbalanced.
\subsection{Scania Trucks Data}
We implement THORS on a Scania trucks dataset of UCI Machine Learning Repository. It records 60,000 component failures for a specific component of the APS system. And these samples fall into two categories: 1,000 failures for a specific component of the APS system and 59,000 ones not related to APS. To formulate this problem into a cost-sensitive classification one, we denote APS related failures as class 1 (positive) and unrelated ones as class 0 (negative). In this case, \textit{FP} refers to the cost that an unnecessary check needs to be done by a mechanic at a workshop, while \textit{FN} refers to the cost of missing a faulty truck, which may cause a breakdown. And costs for \textit{FN} and \textit{FP} are set as 500 and 1, respectively. The imbalance rate here is 59:1, which is a heavily imbalanced case. There are 171 attributes for each observation. We pre-process the original data before starting classification. We choose 10 prominent attributes used for training the classifier through ANOVA F-value for the provided samples. Then the data is divided into three parts, which are training set, validation set and test set. Base classifiers we choose for this problem are logistic regression with cost weighting (Logit), decision tree (DT, combined with Adaboosting), Na\"ive Bayes (NB) and linear discriminant analysis (LDA). Among 60,000 instances, 24,000 observations are used in training base classifier and 24,000 ones are applied to choose the optimal threshold. Remained 12,000 observations are divided into test set. As a comparison, besides our algorithm we also choose two other thresholding methods, including empirical method \citep{sheng2006thresholding} and theoretical thresholding \citep{elkan2001foundations}. Also we compare our results with other meta-learning methods such as Metacost \citep{domingos1999metacost} and CRS (Cost-proportionate Rejection Sampling) \citep{zadrozny2003cost}. Null model (default base classifier) is used as the baseline. Each algorithm is run for 20 times. Average costs with corresponding standard variances, comparison of performance, and the running time for each algorithm on each classifier are reported.

\begin{table*}
\centering
\caption{Average Costs and Standard Deviations for Each Algorithm on Each Classifier for Trucks Data\label{tab2}}
\begin{tabular}{|c|c|c|c|c|c|c|}
\hline
&THORS &Null &Theoretical &Empirical &Metacost &CRS\\
\hline
Logit &$12169 \pm 1277$	&$115099 \pm 1118$ &$117962 \pm 126$ &$15571 \pm 8354$ &$120671 \pm 2743$ &$116602 \pm 885$
 \\
\hline
DT &$12639 \pm 1228$ &$57669 \pm 4836$ &$117962 \pm 126$ &$17613 \pm 14495$ &$117962 \pm 126$ &$32092 \pm 3772$
 \\
\hline
NB &$12550 \pm 1664$ &$17592 \pm 2282$ &$16883 \pm 2173$ &$16934 \pm 2446$ &$12396 \pm 1432$ &$17640 \pm 2819$
 \\
\hline
LDA &$13025 \pm 1484$ &$39097 \pm 3656$ &$29770 \pm 2905$ &$33598 \pm 6477$ &$33802 \pm 3037$ &$34464 \pm 3249$
\\
\hline
\end{tabular}
\end{table*}

\begin{figure}[htb]
\centering
  \begin{tabular}{@{}cccc@{}}
    \includegraphics[width=.450\textwidth]{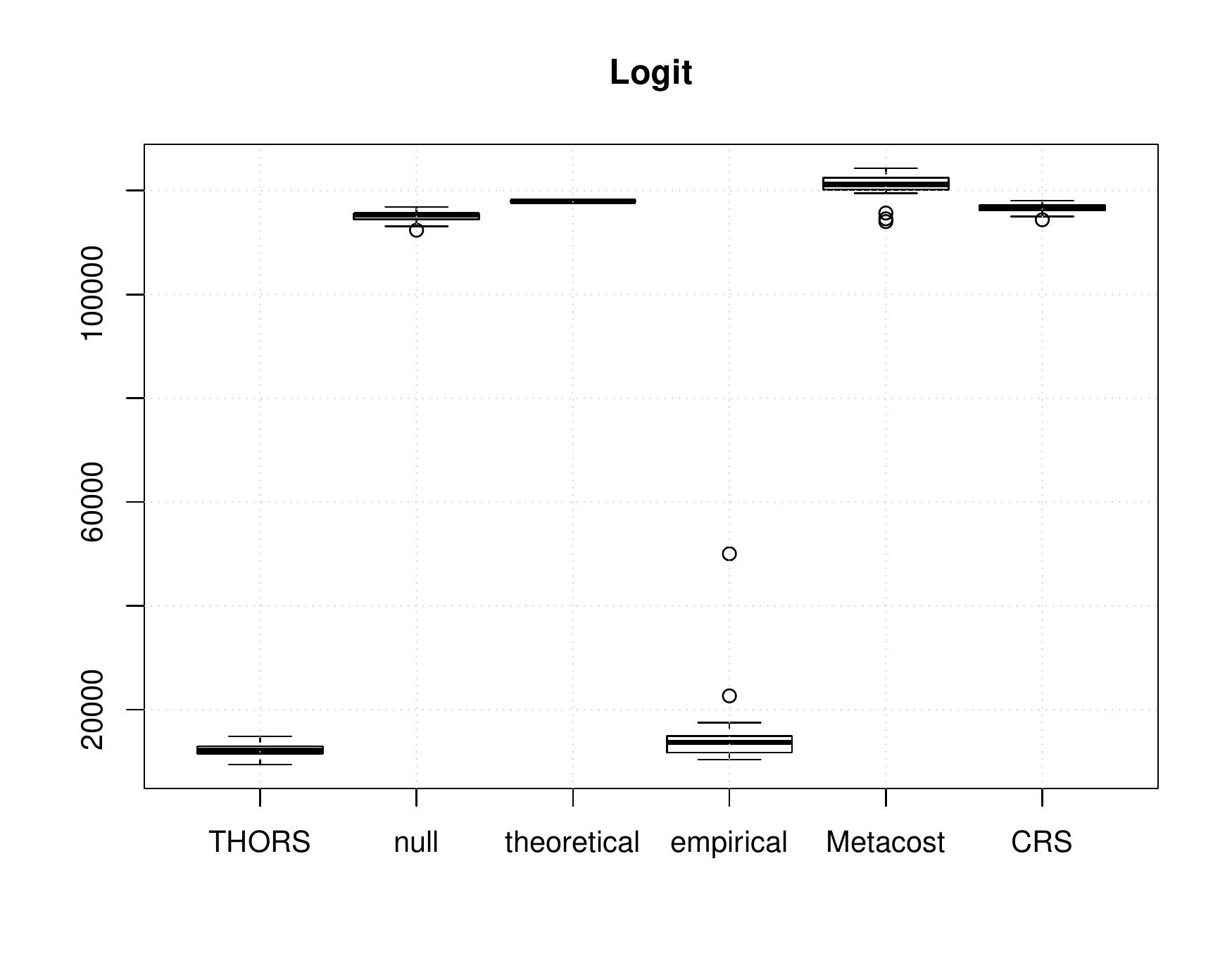} &
    \includegraphics[width=.450\textwidth]{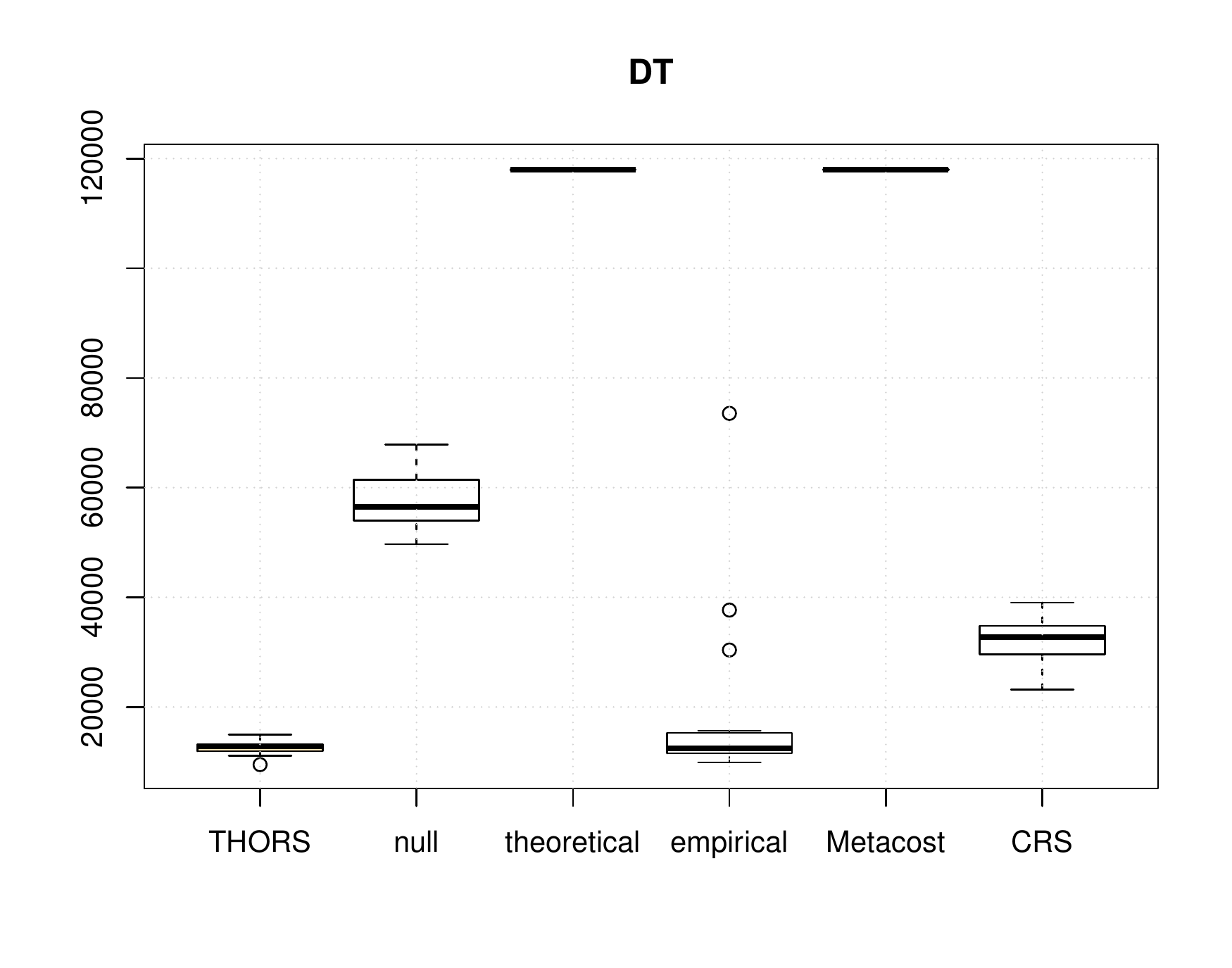}   \\
    \includegraphics[width=.450\textwidth]{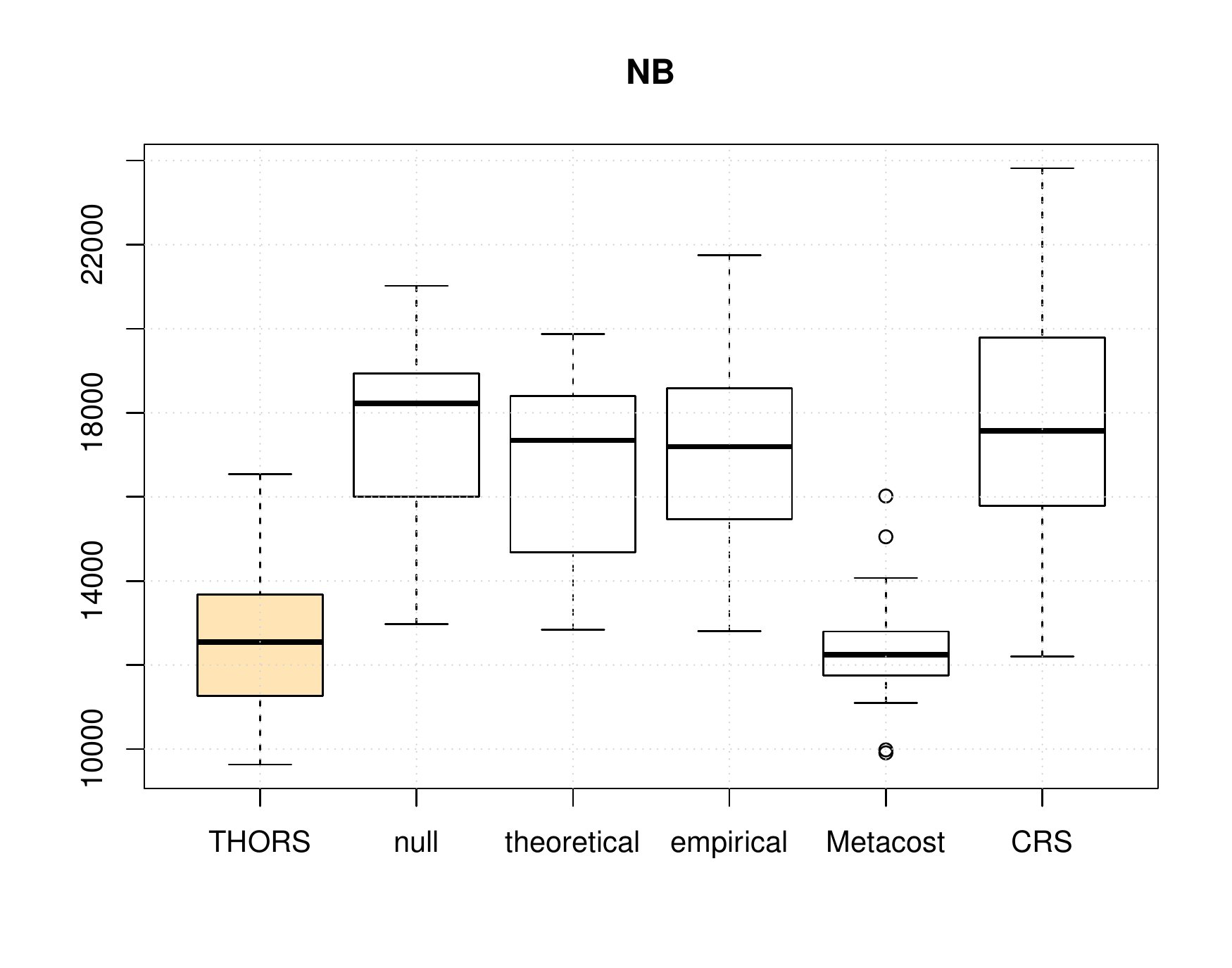} &
    \includegraphics[width=.450\textwidth]{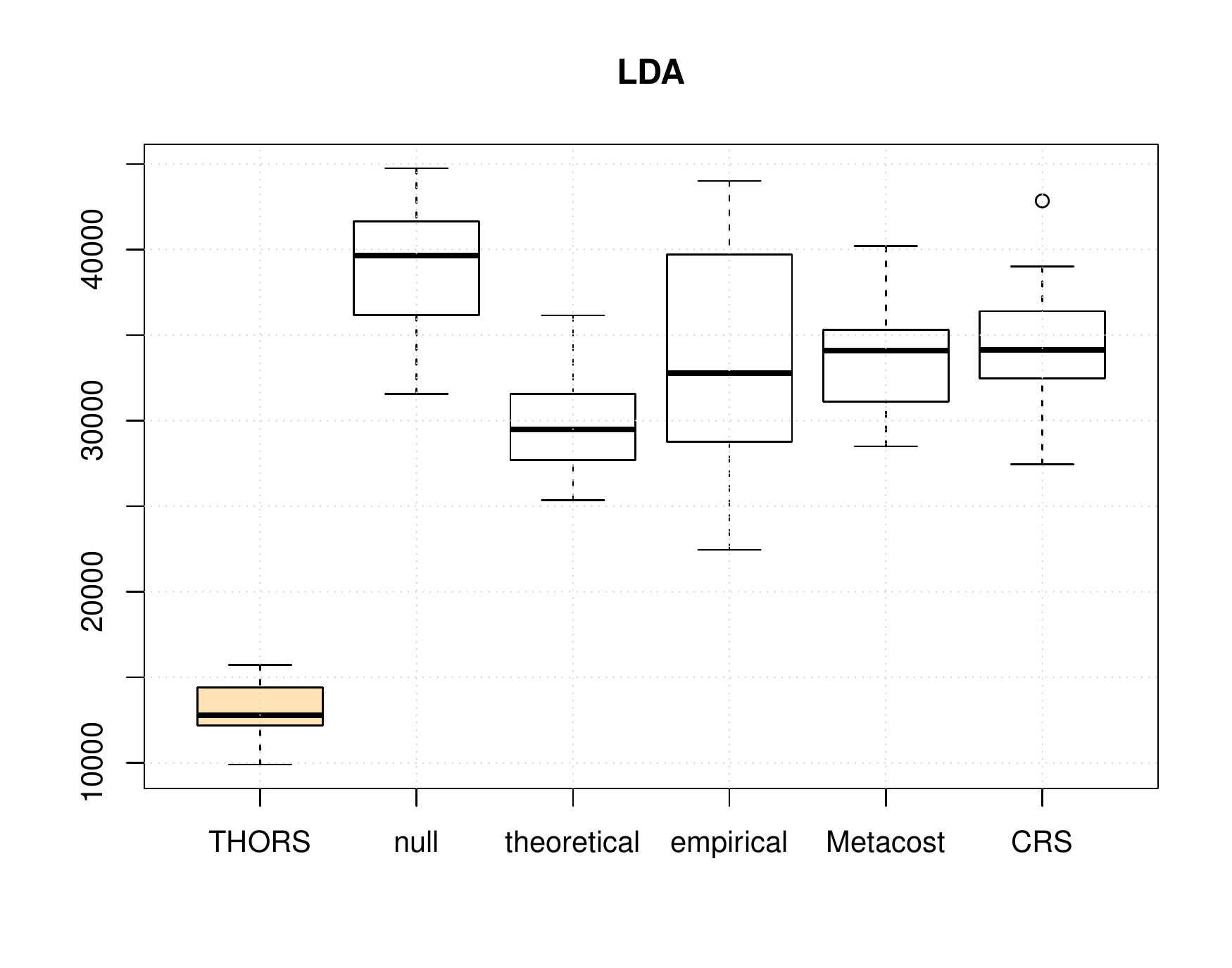}   \\
  \end{tabular}
  \caption{Box-plots of Costs on Test Data by Different Methods for Trucks Data\label{fig1}}
\end{figure}

\begin{table*}
\centering
\caption{Summary of the Experimental Results for Trucks Data\label{tab3}}
(An entry $w/l$ means THORS wins $w$ times and lose $l$ times)
\begin{tabular}{|c|c|c|c|c|c|}
\hline
\diagbox{Base Classifier}{Algorithm} &Theoretical &Null &Empirical &Metacost &CRS\\
\hline
Logit &20/0	&20/0	&15/5	&20/0	&20/0\\
\hline
DT &20/0	&20/0	&10/10	&20/0	&20/0\\
\hline
NB &20/0	&20/0	&20/0	&8/12	&20/0\\
\hline
LDA &20/0	&20/0	&20/0	&20/0 &20/0\\
\hline
\end{tabular}
\end{table*}

We list average costs and their deviance for each algorithm on each classifier in Table \ref{tab2}, from which we can see that for Logit, DT, and LDA classifier, THORS reaches the least average cost with small deviance in various methods. Figure \ref{fig1} exhibited average costs and their deviance for each algorithm on each classifier, from which we can see that for Logit, DT, and LDA classifier, THORS reaches the least average cost with small deviance in various methods. Figure \ref{fig2} presents the box-plots of cost for each approach on different classifiers. It's obvious to see that the box-plot of THORS is always at the bottom, which means that THORS is always among the best algorithms for various classifiers. And Table \ref{tab3} reports the detailed comparison results. THORS wins at least half of 20 rounds in almost all cases (except for Metacost method on NB classifier), beating other strategies in these cases.

\begin{table*}
\centering
\caption{Average Running Time of Some Methods for Trucks Data (unit: s)\label{tab4}}
\begin{tabular}{|c|c|c|c|c|}
\hline
\diagbox{Base Algorithm}{Thresholding} &THORS &Empirical &Metacost &CRS\\
\hline
Logit &1.60	&17.61 &6.92 &0.27	 \\
\hline
DT &2.93 &45.60 &33.06 &0.94 \\
\hline
NB &1.33  &11.77 &0.71 	&0.07  	\\
\hline
LDA &1.60 &13.25  &1.98	&0.10 \\
\hline
\end{tabular}
\end{table*}

\begin{figure}[htb]
\centering
\includegraphics[width=0.45\textwidth]{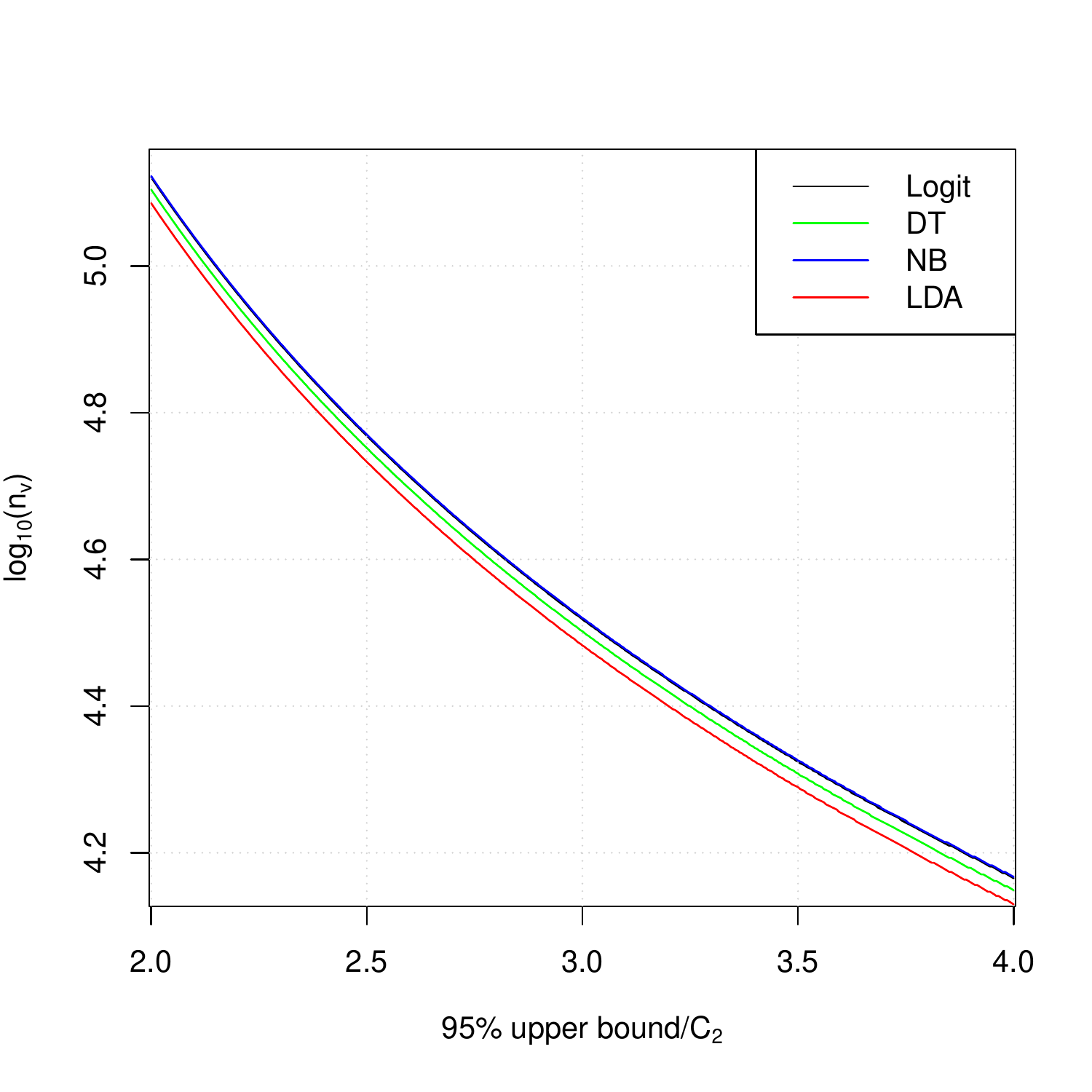}
\caption{Size of Validation Set and Upper Bound of Cost for Trucks Data\label{fig2}}
\end{figure}

Average running time for one round of each algorithm is listed in Table \ref{tab4}. And relationship between estimated minimal size of validation set (after logarithmic transform), $n_v$, and 95\% upper bound are exhibited in Figure \ref{fig2}. From Table \ref{tab4}, it is noticed that THORS is always more time-economic in the comparison with Metacost and empirical method. Although CRS always takes the least time due to its simple re-sampling procedure, the performance of it is always much worse than that of THORS. Finally from Figure \ref{fig2},  it's available to control 95\% upper bound under $3.5C_2$ for four classifiers using present 24,000 instances in validation set by THORS. We can see that four curves are approximately straight lines, showing an exponential relation between the upper bound and validation set size. And it's a tradeoff for the increasing sample size and a tighter upper bound. There is no obvious difference for the requirements of validation set size under the same ratio between the upper bound and $C_2$ between four classifiers for THORS algorithm.

\subsection{Income Data}
The second dataset we choose is an adult dataset containing 32,561 income observations from UCI Machine Learning Repository. Samples belong to two classes: 24,720 people whose income is below 50,000 (class 0, negative) and 7,841 ones whose income is over 50,000 (class 1, positive). The imbalance rate here is 3.15:1, much smaller than the first dataset. We also choose 10 predictors as predictors from 14 other characteristics of each person applying ANOVA F-value, for training base classifiers. To create a cost-sensitive problem we let cost for \textit{FN} and \textit{FP} be 100 and 10, respectively. Among these 32,561 observations, 13,025 instances are used to train base models, 13,024 are utilized for thresholding, and remained 6,512 samples are for evaluating the performance of these approaches. Base classifiers we choose are logistic regression with cost weighting (Logit), Na\"ive Bayes (NB), linear discriminant analysis (LDA), and random forest (RF). We also make a comparison of the performance of THORS with other thresholding schemes such as empirical thresholding method, theoretical thresholding, and other meta-learning strategies, including Metacost and CRS. And each algorithm is also run for 20 times.

\begin{table*}
\centering
\caption{Average Costs and Standard Deviations for Each Algorithm on Each Classifier for Income Data\label{tab5}}
\begin{tabular}{|c|c|c|c|c|c|c|}
\hline
&THORS &Null &Theoretical &Empirical &Metacost &CRS\\
\hline
Logit &$35469 \pm 1371$	&$35584 \pm 1066$ &$49144 \pm 400$ &$46176 \pm 2593$ &$36817 \pm 1117$ &$44717 \pm 3190$
 \\
\hline
LDA &$35689 \pm 934$ &$95099 \pm 2497$ &$36252 \pm 967$ &$45749 \pm 3417$ &$39382 \pm 1009$ &$39492 \pm 1603$
 \\
\hline
NB &$32901 \pm 934$ &$108761 \pm 2883$ &$67100 \pm 3360$ &$49455 \pm 303$ &$110869 \pm 2667$ &$88787 \pm 5488$
 \\
\hline
RF &$29419 \pm 779$ &$57135 \pm 2131$ &$30415 \pm 1338$ &$29507 \pm 1172$ &$29769 \pm 1497$ &$32378 \pm 1185$
\\
\hline
\end{tabular}
\end{table*}

\begin{figure}[htb]
\centering
  \begin{tabular}{@{}cccc@{}}
    \includegraphics[width=.450\textwidth]{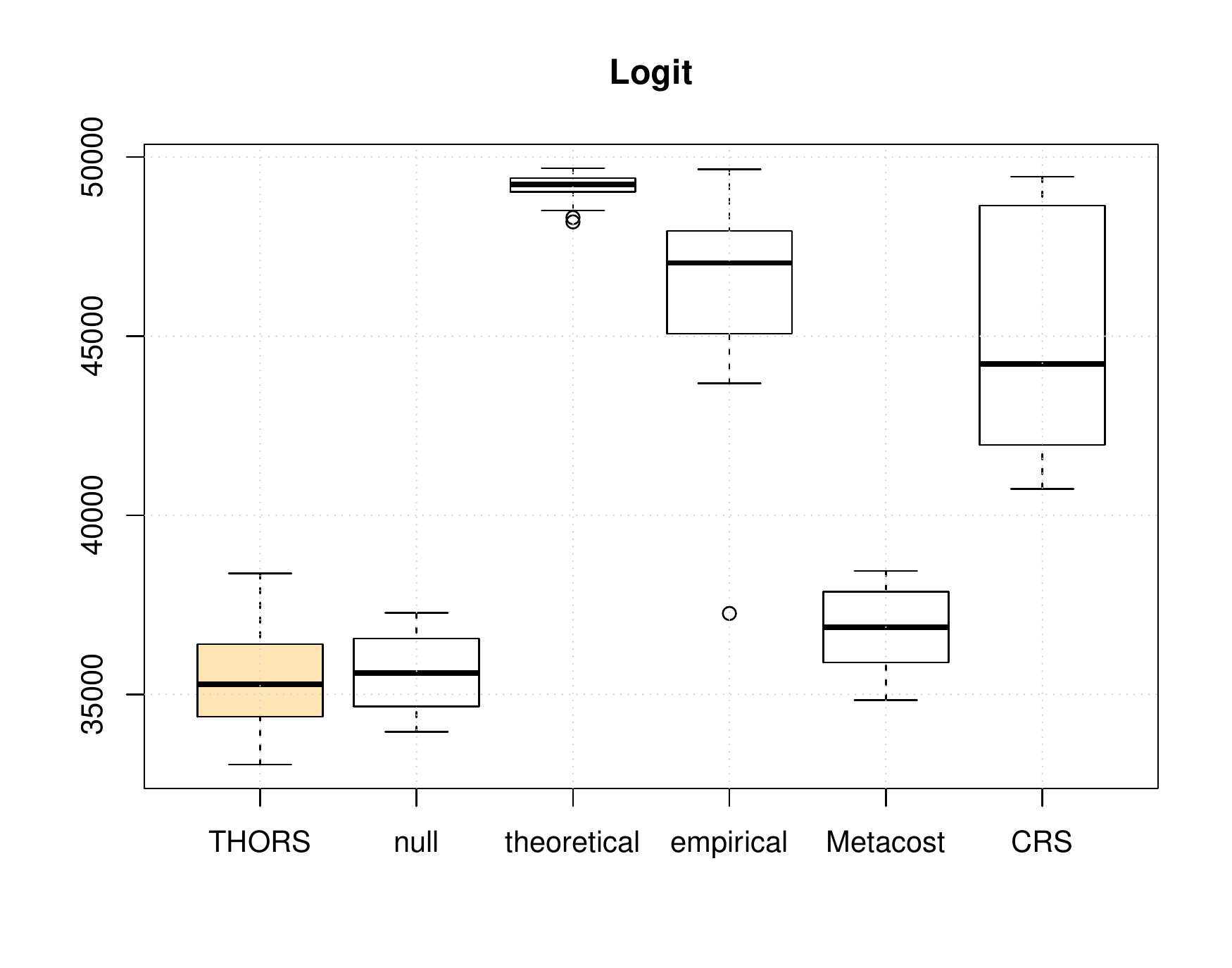} &
    \includegraphics[width=.450\textwidth]{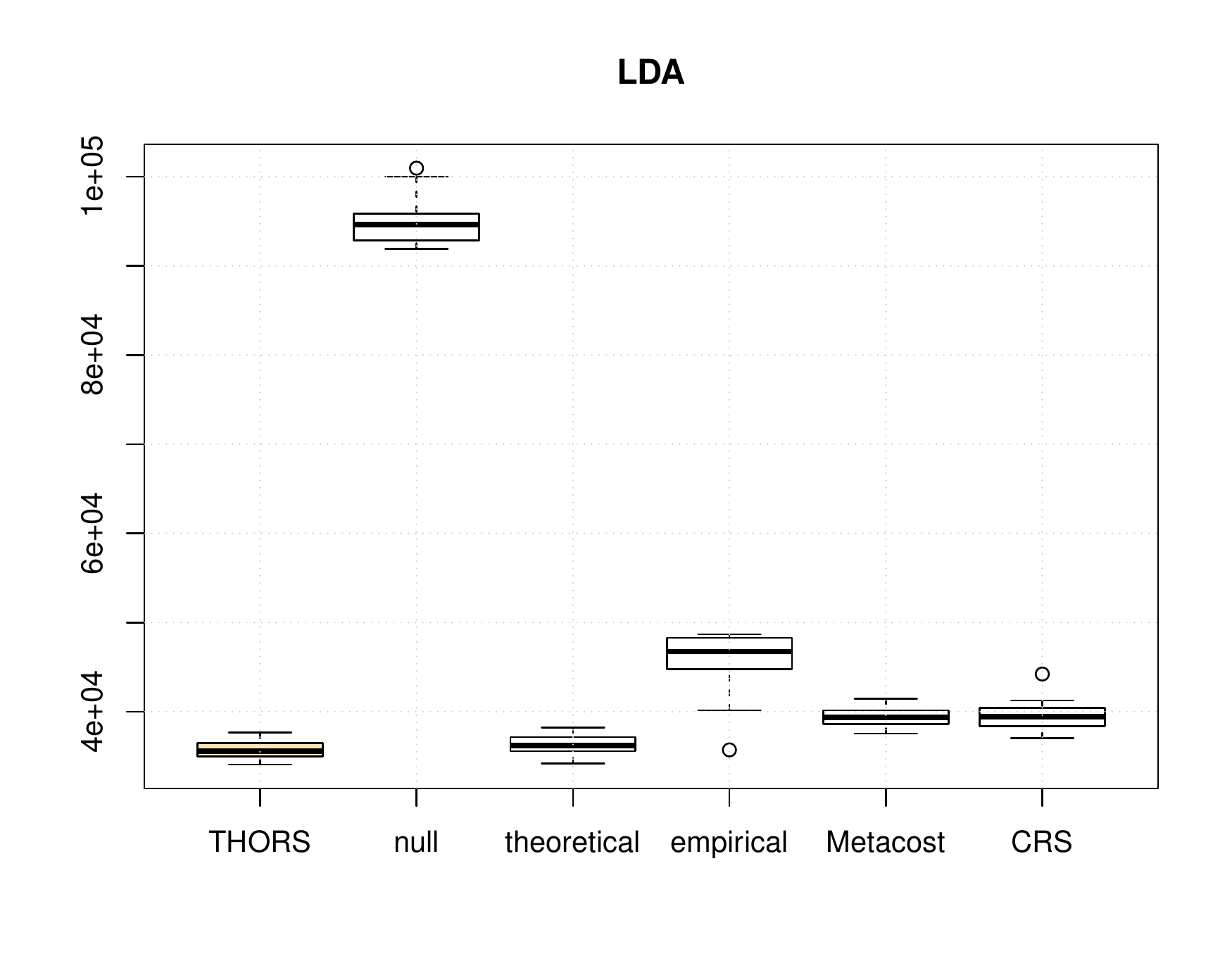}   \\
    \includegraphics[width=.450\textwidth]{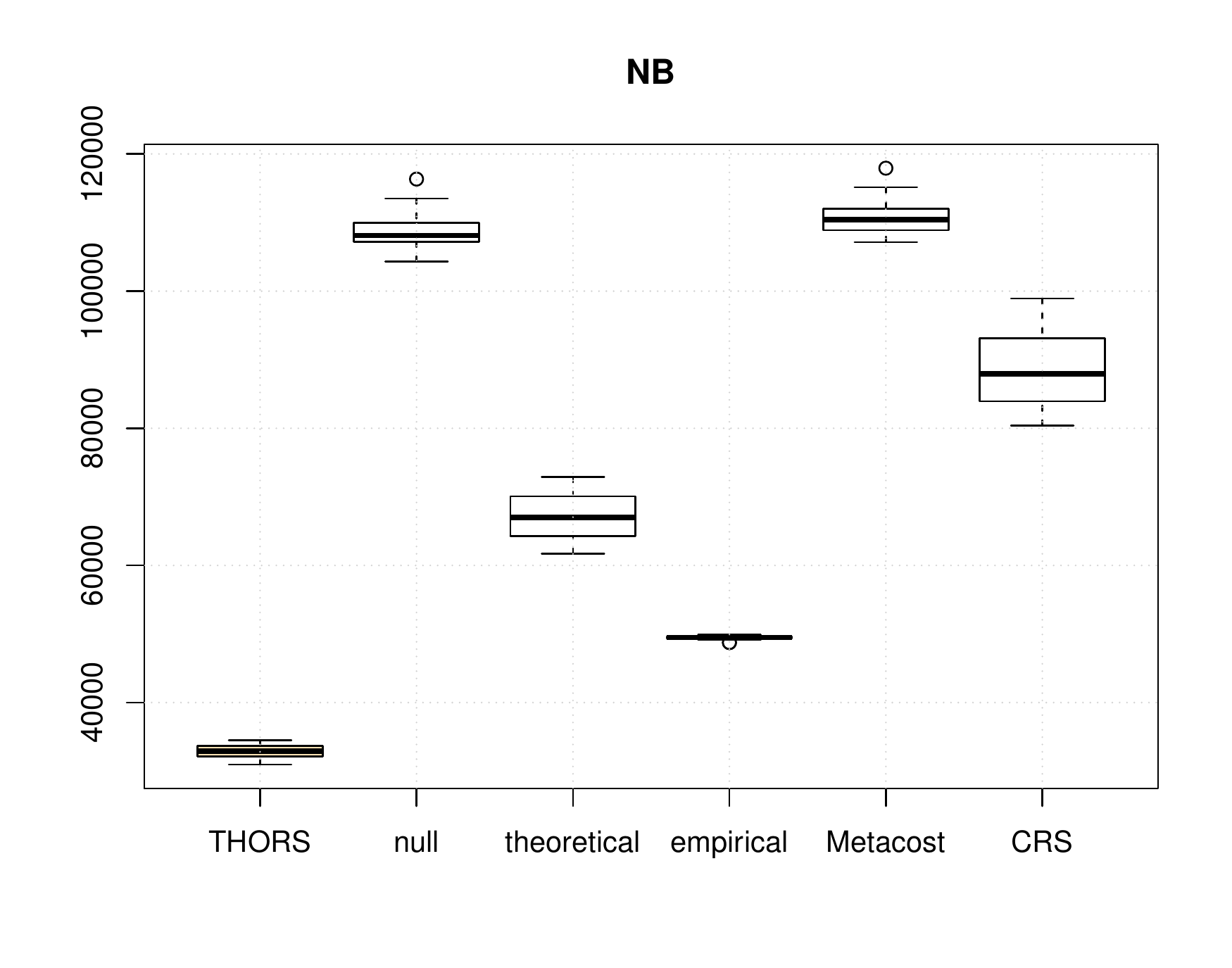} &
    \includegraphics[width=.450\textwidth]{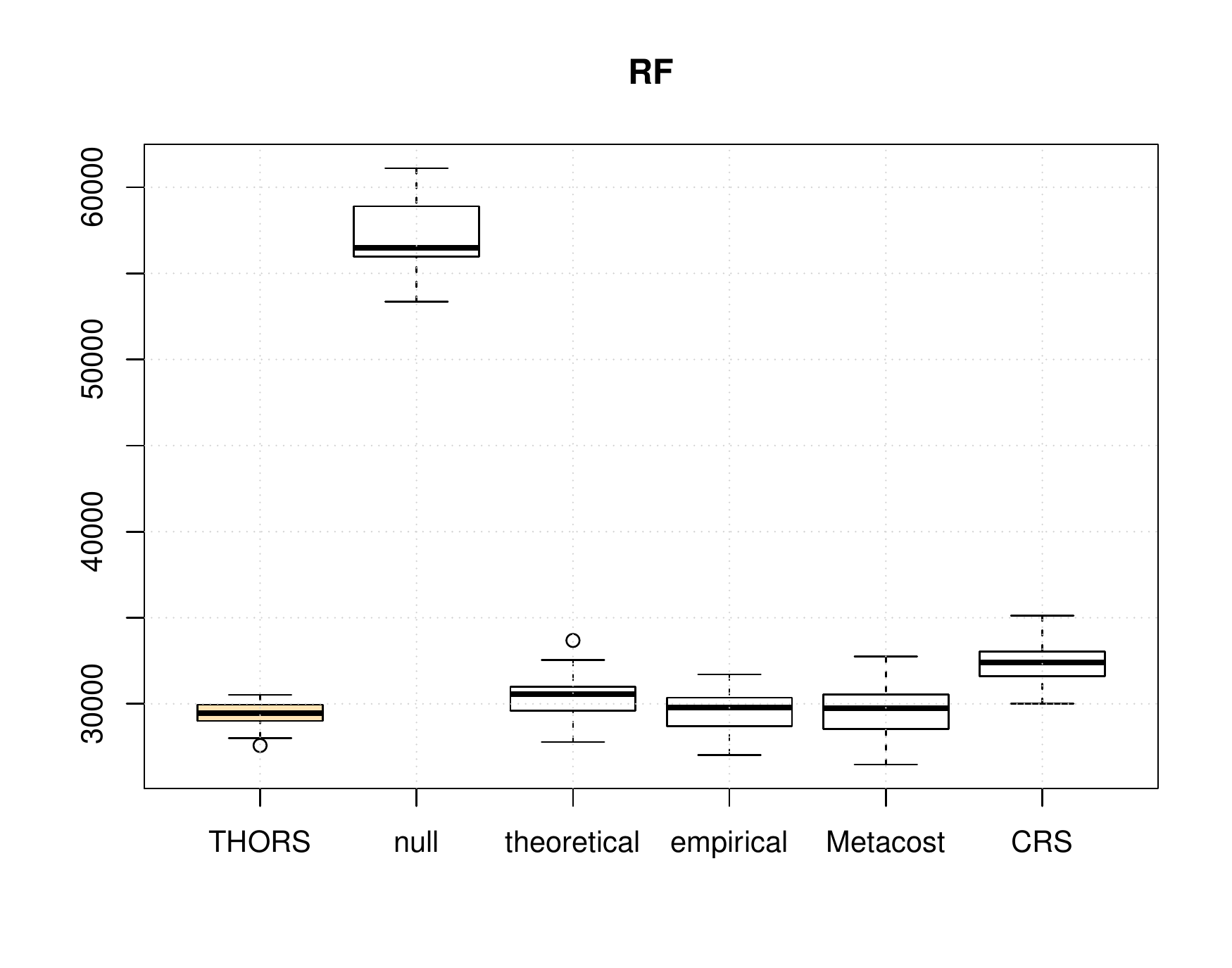}   \\
  \end{tabular}
  \caption{Box-plots of Costs by Different Thresholding Methods\label{fig3}}
\end{figure}

\begin{table*}
\centering
\caption{Summary of the Experimental Results for Income Data}
(An entry w/l means our approach win w times and lose l times\label{tab6})
\begin{tabular}{|c|c|c|c|c|c|}
\hline
\diagbox{Base Algorithm}{Thresholding} &Theoretical &Null &Empirical &Metacost &CRS\\
\hline
Logit &20/0	&12/8	&20/0	&18/2	&20/0\\
\hline
LDA &18/2	&20/0	&20/0	&20/0	&20/0\\
\hline
NB &20/0	&20/0	&20/0	&20/0	&20/0\\
\hline
RF &17/3	&20/0	&13/7	&11/9  &20/0\\
\hline
\end{tabular}
\end{table*}

Table \ref{tab5} shows the average costs and corresponding deviance on the test set of various thresholding and meta-learning approaches. And win/loss comparison results are summarized in Table \ref{tab6}, from which we can notice that THORS can always beat other methods in more than half of 20 rounds. In addition, THORS even wins all the 20 rounds for NB base classifier in the comparison with all other algorithms. We can see that for all the four classifiers, THORS always gets the minimal average cost with small deviance, showing the power of THORS. Box-plots for different approaches are shown in Figure \ref{fig3}, from which it can be observed that box-plot of THORS always stays near the bottom, representing a low cost on test set.

\begin{table*}
\centering
\caption{Average Running Time of Some Methods (unit: s)\label{tab7}}
\begin{tabular}{|c|c|c|c|c|}
\hline
\diagbox{Base Algorithm}{Thresholding} &THORS &Empirical &Metacost &CRS\\
\hline
Logit &1.01	&15.88 &41.51 &0.66	 \\
\hline
LDA &0.98 &13.11 &27.88 &0.46 \\
\hline  	 	 	
NB &0.83 &10.94 &0.43 	&0.05 	\\
\hline
RF &1.32 &54.39  &13.84  &0.53 \\
\hline
\end{tabular}
\end{table*}

\begin{figure}[htb]
\centering
\includegraphics[width=0.45\textwidth]{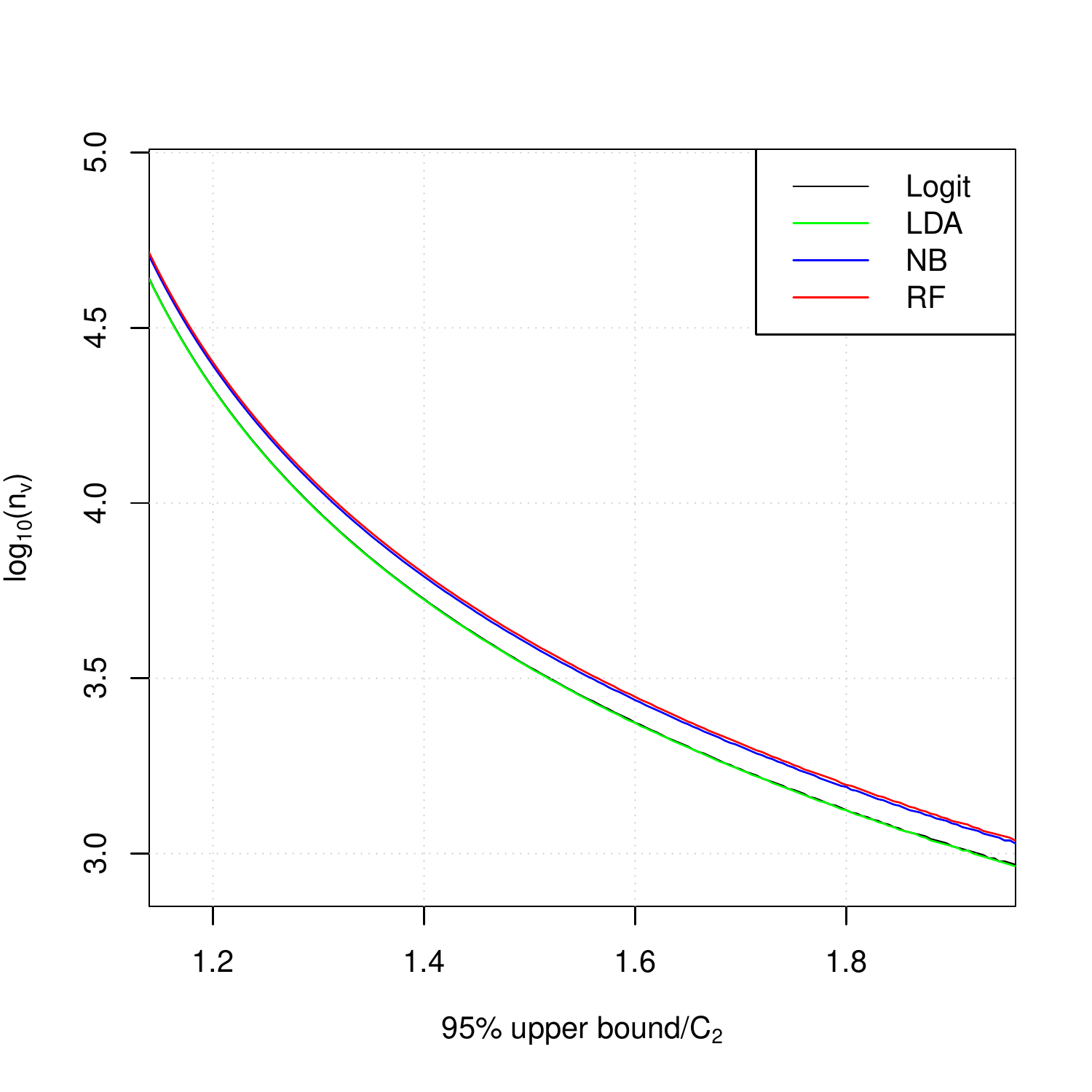}
\caption{Minimal size of validation Set under 95\% specific upper bound\label{fig4}}
\end{figure}

Table \ref{tab7} exhibits the average running time for a single round for various cost-sensitive algorithms, showing us that THORS is very efficient. And Figure \ref{fig4} presents the relation between 95\% upper bound and the conservative estimation of minimal size of validation set. We can see the similar approximately linear relation between the logarithmic size and the upper bound as in the previous case. It's easy to control the expected costs under 1.3$C_2$ in the probability of 95\% for the present validation set size. And the size will also boom when the upper bound decreases. This result also indicates the trade-off we mentioned in the case of the trucks data set.

\subsection{Telescope Data}
In this case, we investigate the MAGIC gamma telescope data, which is from UCI Machine Learning Repository. All the 19,020 instances are divided into two classes, including 12,332 class 0 samples and 6,688 class 1 ones. The imbalance rate is 1.84:1. Except for classes, there are 10 other attributes, which will be set as predictors in the models. To formulate this problem into a cost-sensitive one, we set the costs as 100 and 20 for False Negative case and False Positive case. Among 19,020 instances, 7,608 observations are used for training base classifier, denoted as training set, 7,608 ones are applied to choose the optimal threshold, denoted as validation set, and remained 3,804 observations are set as the test set. And the base classifiers we choose include logistic regression with cost weighting (Logit), linear discriminant analysis (LDA), Na\"ive Bayes (NB), and random forest (RF). The same as previous two datasets, we will compare results of THORS with other thresholding methods, including the null model, theoretical method and empirical method, and other meta-learning approaches, including Metacost and CRS. Each algorithm is also run for 20 rounds.

\begin{table*}
\centering
\caption{Average Costs and Standard Deviations for Each Algorithm on Each Classifier for Telescope Data\label{tab8}}
\begin{tabular}{|c|c|c|c|c|c|c|}
\hline
&THORS &Null &Theoretical &Empirical &Metacost &CRS\\
\hline
Logit &$21773 \pm 544$	&$24713 \pm 241$ &$23534 \pm 592$ &$21606 \pm 586$ &$24565 \pm 514$ &$24713 \pm 241$
 \\
\hline
LDA &$23924 \pm 713$ &$60858 \pm 2028$ &$23833 \pm 600$ &$24500 \pm 351$ &$25301 \pm 595$ &$24334 \pm 910$
 \\
\hline
NB &$24613 \pm 532$ &$84904 \pm 2550$ &$65014 \pm 2675$ &$24713 \pm 241$ &$82121 \pm 2291$ &$73854 \pm 3756$
 \\
\hline
RF &$16948 \pm 596$ &$32836 \pm 2131$ &$16633 \pm 394$ &$20142 \pm 1080$ &$17735 \pm 802$ &$18313 \pm 982$
\\
\hline
\end{tabular}
\end{table*}

\begin{figure}[htb]
\centering
  \begin{tabular}{@{}cccc@{}}
    \includegraphics[width=.450\textwidth]{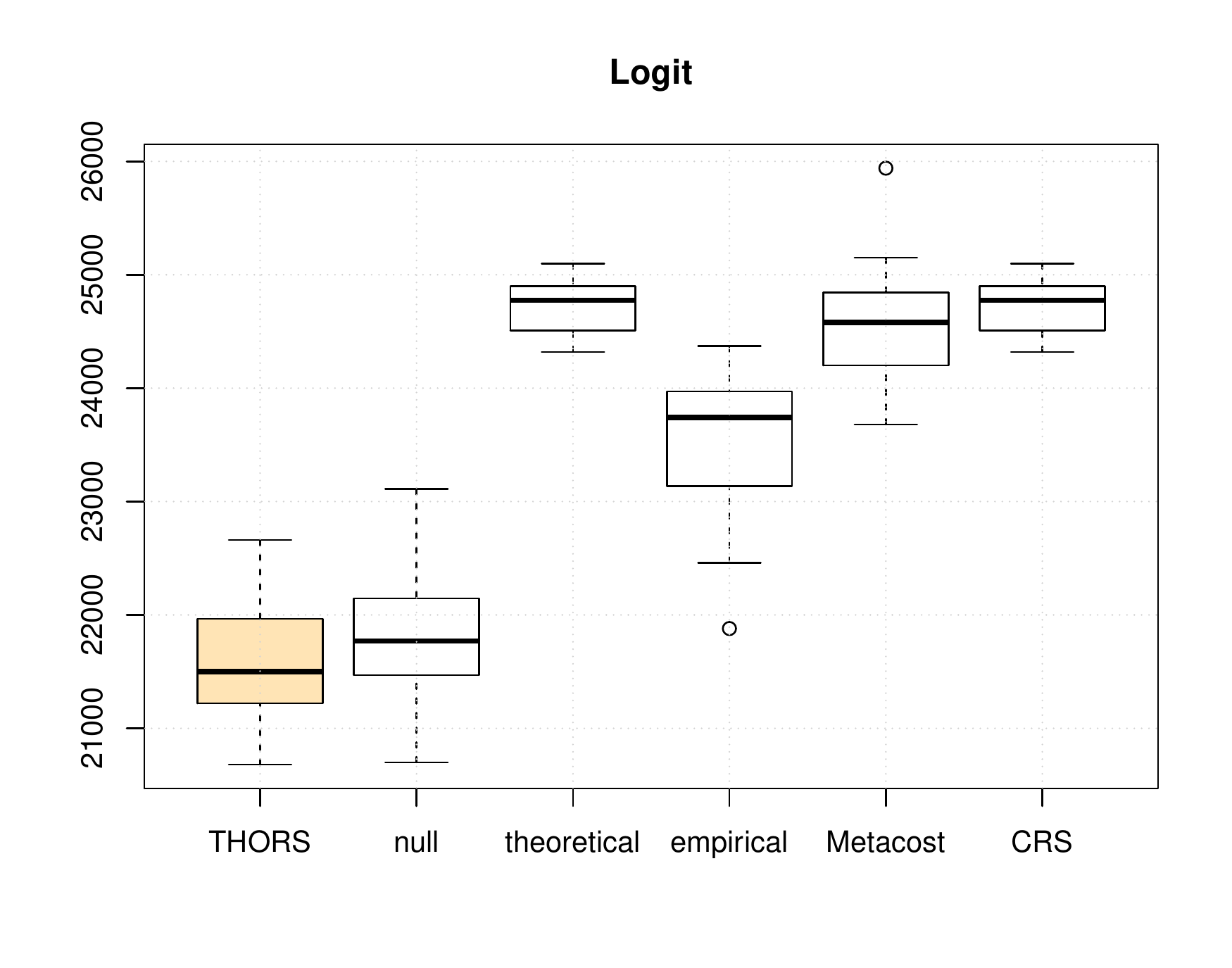} &
    \includegraphics[width=.450\textwidth]{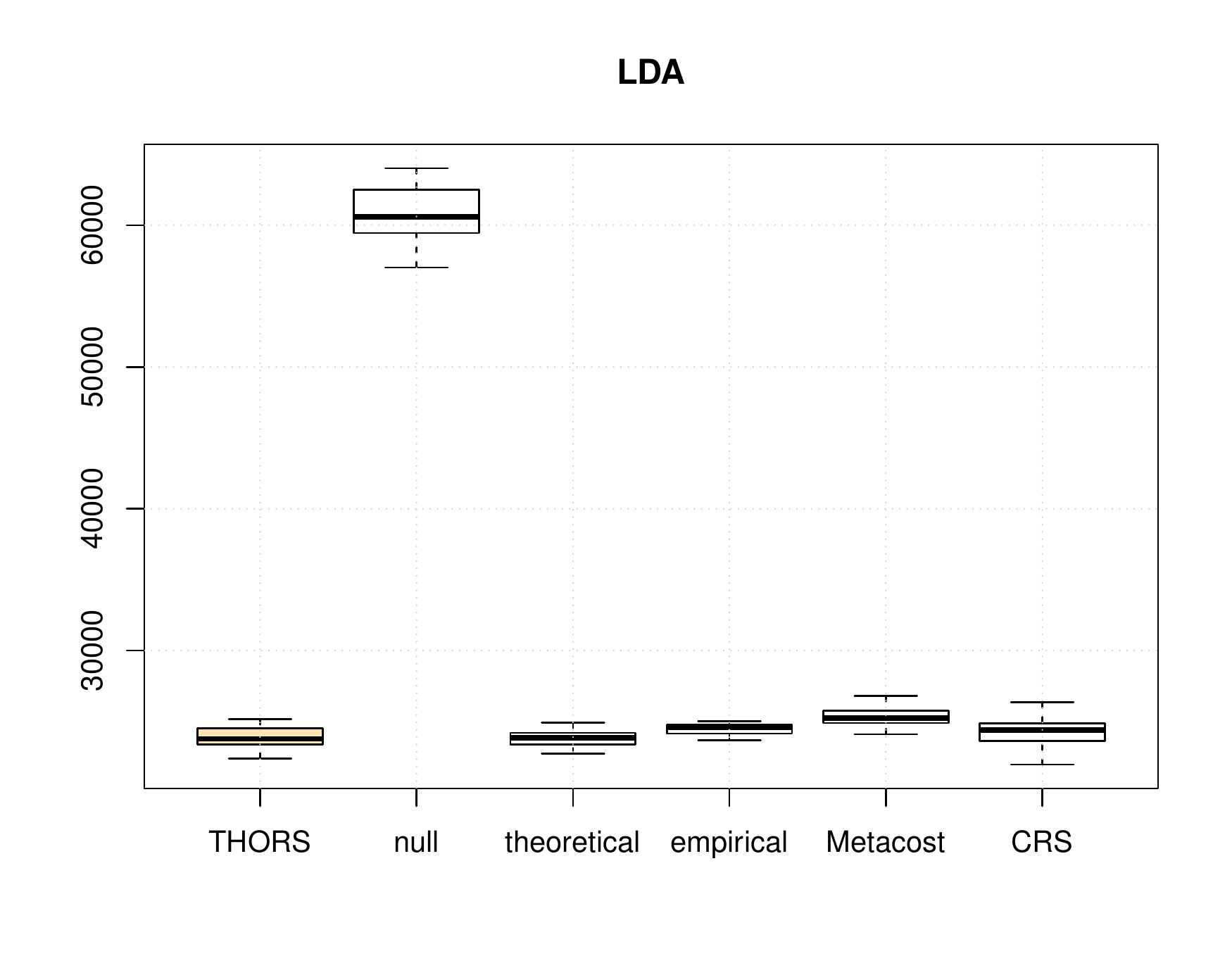}   \\
    \includegraphics[width=.450\textwidth]{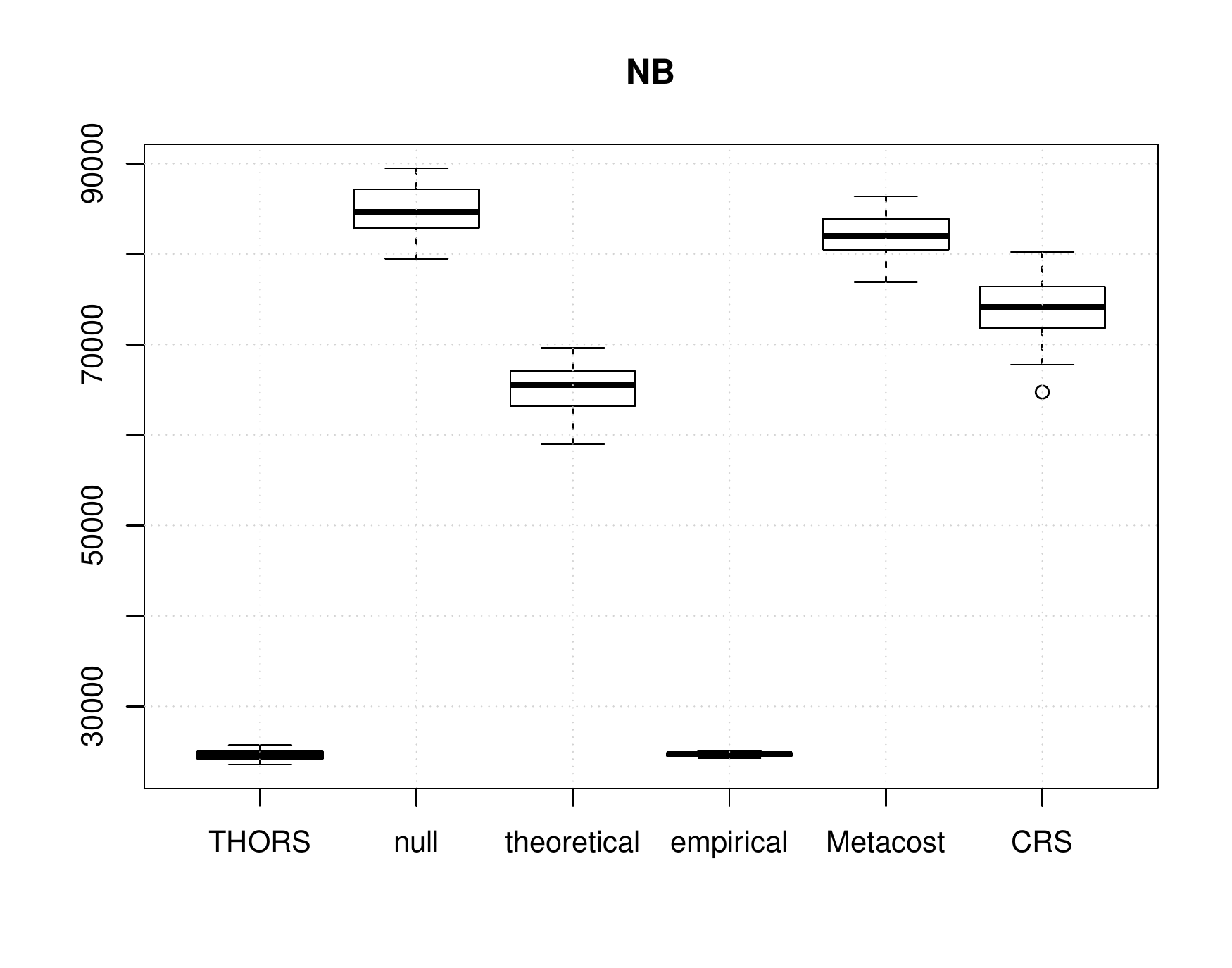} &
    \includegraphics[width=.450\textwidth]{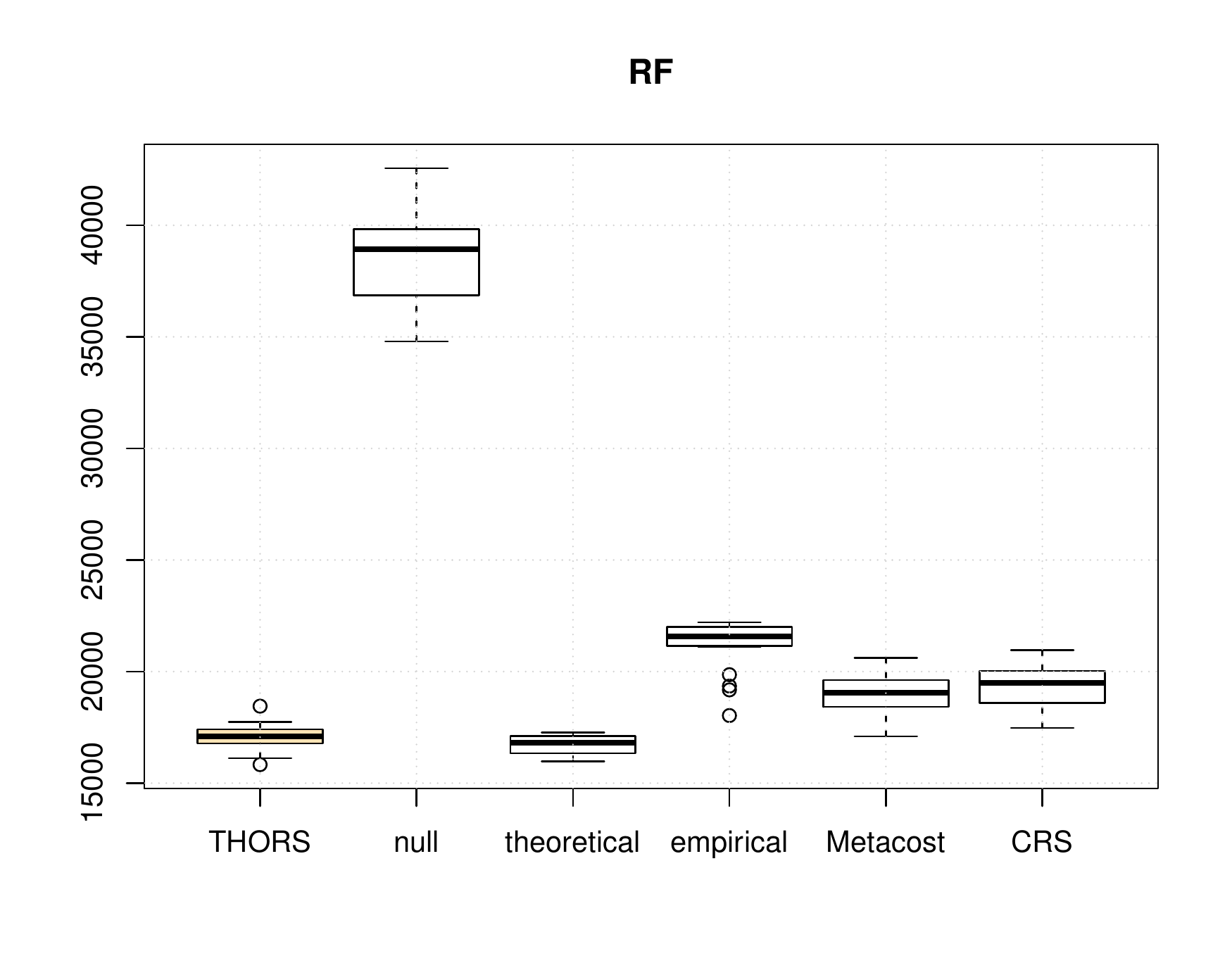}   \\
  \end{tabular}
  \caption{Box-plots of Costs by Different Thresholding Methods for Telescope Data\label{fig5}}
\end{figure}

\begin{table*}
\centering
\caption{Summary of the Experimental Results for Telescope Data}
(An entry $w/l$ means our approach win $w$ times and lose $l$ times\label{tab9})
\begin{tabular}{|c|c|c|c|c|c|}
\hline
\diagbox{Base Algorithm}{Thresholding} &Theoretical &Null &Empirical &Metacost &CRS\\
\hline
Logit &20/0	&13/7	&20/0	&20/0	&20/0\\
\hline
LDA &9/11	&20/0	&16/4	&19/1	&16/4\\
\hline
NB &20/0	&20/0	&13/7	&20/0	&20/0\\
\hline
RF &7/13	&20/0	&20/0	&20/0	&19/1\\
\hline
\end{tabular}
\end{table*}

It can be noticed from Table \ref{tab9} that for almost all the cases, THORS defeats other methods over half of 20 rounds (except for theoretical method for LDA and RF classifier). And from Table \ref{tab8}, we can observe that the difference between average costs of THORS and theoretical method for LDA and RF classifier is very small. We can also see from Figure \ref{fig5} that box-plot of THORS always stay at the bottom of the figure.

\begin{table*}
\centering
\caption{Average Running Time of Some Methods (unit: s)\label{tab10}}
\begin{tabular}{|c|c|c|c|c|}
\hline
\diagbox{Base Algorithm}{Thresholding} &THORS &Empirical &Metacost &CRS\\
\hline
Logit &0.55	&11.45 &14.74 &0.56	 \\
\hline
LDA &0.50 &9.51 &9.93 &0.39 \\
\hline  	 	 	
NB &0.48 &9.10 &7.52  &0.30 	\\
\hline
RF &0.77 &37.98  &24.85  &0.34 \\
\hline
\end{tabular}
\end{table*}

\begin{figure}[htb]
\centering
\includegraphics[width=0.45\textwidth]{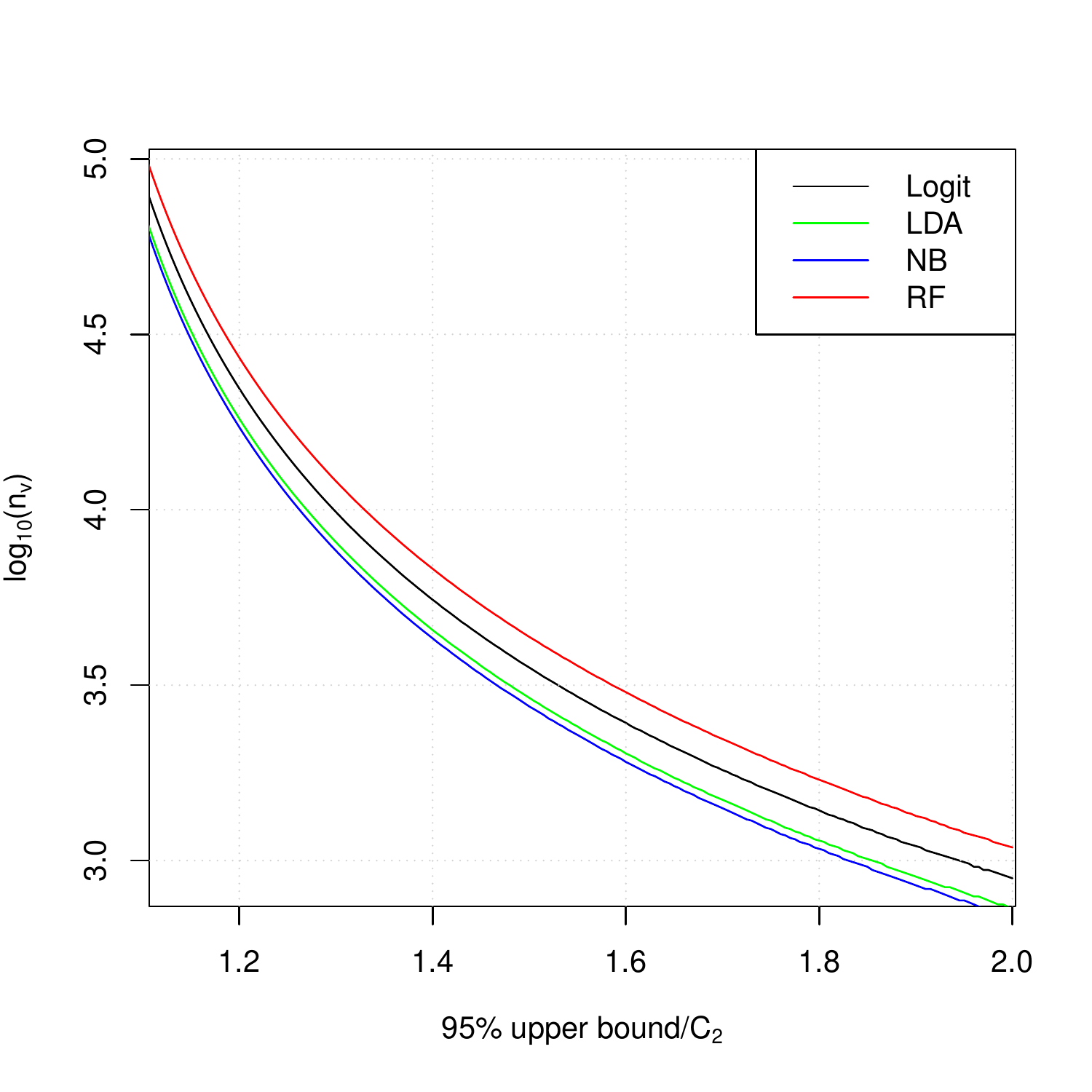}
\caption{Minimal size of validation Set under 95\% specific upper bound\label{fig6}}
\end{figure}

Table \ref{tab10} lists all the average running times for each algorithm, and from this we can observe that THORS is very efficient. And Figure \ref{fig6} shows us that it is possible to control the expected cost under about $1.3C_2$ in 95\% probability for the present validation set size. Under the same ratio between upper bound and $C_2$, RF classifier seems to need a little more samples in the validation set than other three base classifiers.
\section{Discussion}
We propose an effective and efficient thresholding algorithm THORS to make an arbitrary scoring-type classifier whose cumulative distribution function of scores is continuous into a cost-sensitive one. Its idea of using order statistic to classify is intuitive and simple. THORS usually results in an excellent performance in terms of cost and time complexity. It can always induce the lowest cost in a short time among various algorithms. Besides, different from other popular cost-sensitive methods, we prove that THORS has several theoretical properties including the bound of the expected cost and an asymptotic boundary, which can also be used to estimate the size of validation to find optimal threshold controlling the expected cost with a specific probability. Finally, THORS often achieves drastic savings in computational resources for its low time complexity, which is desirable for applications that involve a massive amount of data.

One direction for the future work is the application of THORS to multiclass cost-sensitive problems. Here we introduce a simple idea for extending THORS to multiclass cases. Supposed that we have $m$ classes noted as class $1,2,...,m$. And the costs of misclassifying them are decreasing correspondingly. Then we will find an optimal threshold vector with length $m-1$.  Here the full data is also divided into the training set and validation set to train base models and find optimal thresholds, respectively. Firstly we can sort the misclassification cost of different classes by decreasing order $s_1,...,s_m$. Next, we apply THORS on the validation set in terms of class $s_1$ to get a threshold, $c_1$. Estimated expected cost on validation set can be similarly defined as (\ref{escost}). Then optimal order statistic of one score corresponding to minimal empirical expected cost will be picked up as the first threshold. Next, we can apply THORS again on remained unlabeled observations in validation set and get the second threshold $c_2$. Similarly, all $m-1$ thresholds $c_1,c_2,...,c_{m-1}$ will be determined. With the threshold vector $c=(c_1,c_2,...,c_{m-1})$, we can construct the following cost-sensitive classifier: Given a new data point, we firstly decide whether it should be classified into class $s_1$ or not on the basis of $c_1$. If it does not fall into class $s_1$, then we calculate the score estimating its possibility belonging to class $s_2$ and assign it to class $s_2$ if the score is lower than $c_2$. The process holds on until the instance is labeled. Another interesting problem is the case that the cumulative distribution function of the score is not continuous. There may be other ways to yield comparable conclusions in discontinuity case. Besides, THORS can be combined with other approaches like resampling, which may lead to a better performance for imbalanced problem.

\section*{Acknowledgments}
This work is supported by the National Key Research and Development Plan (No. 2016YFC0800100) and the NSF of China (No. 11671374, 71771203).

\bibliographystyle{ims}
\bibliography{references}

\section*{Appendix: Proofs}
\setcounter{equation}{0}
\renewcommand{\theequation}{A.\arabic{equation}}

\subsection*{Proof of Theorem 1}
\begin{proof}
For (\ref{classification function}) and the relation between $k_0, k_1$ and $k$ illustrated in Theorem \ref{th1}, class 0 observations scored as $T^0_{(k_0+1)},T^0_{(k_0+2)},...,T^0_{(n_0)}$ will be classified into class 1 and class 1 observations scored as $T^1_{(1)},T^1_{(2)},...,T^1_{(k_1)}$ will be classified into class 0 if we choose $T_{(k)}$ as the threshold. Then empirical \textit{False Negative Rate (FNR)} denoted as $\widehat{\alpha}_1$ and empirical \textit{False Positive Rate (FPR)} denoted as $\widehat{\alpha}_0$ on validation set satisfy
\begin{equation}
\widehat{\alpha}_{i0} = \frac{n_{0}-k_{0}}{n_{0}},
\end{equation}
\begin{equation}
\widehat{\alpha}_{i1} = \frac{k_{1}}{n_{1}}.
\end{equation}
And for each instance, the cost can be expressed as
\begin{equation}
C(j,k) =
\left\{
\begin{array}{rcl}
\beta A      &,      &j=0\,\,and\,\,k=1 \\
A     &,      &j=1\,\,and\,\,k=0 \\
0     &,      &otherwise.\\
\end{array}
\right.
\end{equation}
And the marginal probabilities of class 0 and 1, are estimated as $\widehat{\pi}_0=\frac{n_0}{n_v}$ and $\widehat{\pi}_1=\frac{n_1}{n_v}$. Thus, by (\ref{escost}), the expected cost on validation set satisfies
\begin{equation}
\begin{aligned}
\widehat{C} &= \sum_{i=1}^{n_v}[C(0,1)\widehat{\alpha}_{i0}\widehat{\pi}_0+C(1,0)\widehat{\alpha}_{i1}\widehat{\pi}_1]
= \sum_{i=1}^{n_v}(\beta A\widehat{\pi}_0\widehat{\alpha}_{i0}+A\widehat{\pi}_1\widehat{\alpha}_{i1}) \\
&= n_vA\Big(\beta\widehat{\pi}_0+\frac{\widehat{\pi}_1k_1}{n_1}-\frac{\beta\widehat{\pi}_0k_0}{n_0}\Big)
\end{aligned}
\end{equation}
which means that we only need to minimize $\frac{\widehat{\pi}_1 k_1}{n_1} - \frac{\beta \widehat{\pi}_0 k_0}{n_0}$ to minimize $\widehat{C}$. This completes the proof.
\end{proof}

\subsection*{Proof of Theorem 2}
\begin{proof}
In the following notation ``$\cong$'' represents ``defined as''. By definitions of $k_0$ and $k_1$ in Theorem \ref{th1}, we know that threshold $c = T_{(k)}$ satisfies
\begin{equation}
max\{T^0_{(k_0)},T^1_{(k_1)}\} \leq T_{(k)} \leq min\{T^0_{(k_0+1)},T^1_{(k_1+1)}\},
\end{equation}
indicating that \textit{FPR} of instance $i$
\begin{equation}
\begin{aligned}
\alpha_{i0} &= P(T_i>T_{(k)}|class(i)=0)\leq P(T_i>T^0_{(k_0)}|class(i)=0)\\
&\cong P(T^0_i>T^0_{(k_0)}) = y_{i0},
\end{aligned}
\end{equation}
and also
\begin{equation}
\begin{aligned}
\alpha_{i0} &= P(T_i>T_{(k)}|class(i)=0)\geq P(T_i>T^0_{(k_0+1)}|class(i)=0)\\
&\cong P(T^0_i>T^0_{(k_0+1)}) = z_{i0}.
\end{aligned}
\end{equation}
Similarly, for \textit{FNR} denoted as $\alpha_{i1}$, we have
\begin{equation}
\begin{aligned}
\alpha_{i1} &= P(T_i \leq T_{(k)}|class(i)=1)\leq P(T_i \leq T^1_{(k_1+1)}|class(i)=1)\\
&\cong P(T^1_i \leq T^1_{(k_1+1)}) = y_{i1},
\end{aligned}
\end{equation}

\begin{equation}
\begin{aligned}
\alpha_{i1} &= P(T_i \leq T_{(k)}|class(i)=1)\geq P(T_i \leq T^1_{(k_1)}|class(i)=1)\\
&\cong P(T^1_i \leq T^1_{(k_1)}) = z_{i1}.
\end{aligned}
\end{equation}
Above all, two types of error can be controlled as
\begin{align}
&z_{i0} \cong P(T^0_i > T^0_{(k_0+1)}) \leq\alpha_{i0} \leq P(T^0_i > T^0_{(k_0)}) \cong y_{i0},\label{FPRbd} \\
&z_{i1} \cong P(T^1_i \leq T^1_{(k_1)}) \leq\alpha_{i1} \leq P(T^1_i \leq T^1_{(k_1+1)}) \cong y_{i1}. \label{FNRbd}
\end{align}
For the randomness of order statistic, here $z_{i0}, z_{i1}, y_{i0}, y_{i1}$ are all random variables instead of constants. Now let's investigate distributions of $y_{i1}, y_{i0}, z_{i1}, z_{i0}$. Firstly we denote the conditional cumulative distribution function $P(T_i \leq x|class(i)=0) = P(T^0_i \leq x)$ as $F_0(x)$ and $P(T_i \leq x|class(i)=1) = P(T^1_i \leq x)$ as $F_1(x)$. Hence,
\begin{equation}
\begin{aligned}
P(y_{i0}\leq x) &= P(P(T^0_i > T^0_{(k_0)}) \leq x)= P(1 - F_0(T^0_{(k_0)}) \leq x)\\
&= P(F_0(T^0_{(k_0)}) \geq 1 - x)= P(T^0_{(k_0)} \geq F_0^{-1}(1-x))\\
&= P(at\, least\, (n_0-k_0+1)\,\,of\,\,T^{0}_{i}\,'s\,\,are\,\,no\,\,less\,\,than\,\, F_0^{-1}(1-x))\\
&= \sum_{j=n_0-k_0+1}^{n_0}\binom{n_0}{j}[1-F_0(F_0^{-1}(1-x))]^{j}[F_0(F_0^{-1}(1-x))]^{n_0-j}\\
&= \sum_{j=n_0-k_0+1}^{n_0}\binom{n_0}{j}x^j(1-x)^{n_0-j},
\end{aligned}
\end{equation}
\begin{equation}
\begin{aligned}
P(y_{i1}\leq x) &= P(P(T^1_i \leq T^1_{(k_1+1)}) \leq x)= P(F_1(T^1_{(k_1+1)}) \leq x)\\
&= P(T^1_{(k_1+1)} \leq F_1^{-1}(x))= P(at\,\,least\,\,(k_1+1)\,\,of\,\,T^{1}_{i}\,'s\,\,are\,\,no\,\,more\,\,than\,\, F_1^{-1}(x))\\
&= \sum_{j=k_1+1}^{n_1}\binom{n_1}{j}[F_1(F_1^{-1}(x))]^{j}[1-F_1(F_1^{-1}(x))]^{n_1-j}\\
&= \sum_{j=k_1+1}^{n_1}\binom{n_1}{j}x^j(1-x)^{n_1-j}.
\end{aligned}
\end{equation}
Similarly there hold
\begin{equation}
P(z_{i0} \leq x) = \sum_{j=n_0-k_0}^{n_0}\binom{n_{0}}{j}x^{j}(1-x)^{n_{0}-j},
\end{equation}
\begin{equation}
P(z_{i1} \leq x) = \sum_{j=k_{1}}^{n_{1}}\binom{n_{1}}{j}x^{j}(1-x)^{n_{1}-j}.
\end{equation}
Connecting (\ref{FPRbd}) and (\ref{FNRbd}) and distributions of $y_{i1},y_{i0},z_{i1},z_{i0}$, (\ref{cdfbd1}) and (\ref{cdfbd2}) hold. This completes the proof.
\end{proof}

\subsection*{Proof of Theorem 3}
\begin{proof}
The expression of expected cost on test set $C$ can be derived in the following.
Denoting the cost, true class, predicted class of instance $i$ and $C_i(1,0)$ as $Cost_i, Q_i, \widehat{Q}_i$ and $A$, then we have
\begin{align}
E(Cost_i) &= E(Cost_i|Q_i=0)\pi_0 + E(Cost_i|Q_i=1)\pi_1,\\
E(Cost_i|Q_i=0) &= E(Cost_i|\widehat{Q}_i=1,Q_i=0)P_i(1|0) = \beta A\alpha_{i0},\\
E(Cost_i|Q_i=1)  &= E(Cost_i|\widehat{Q}_i=0,Q_i=1)P_i(0|1)= A\alpha_{i1}.
\end{align}
Therefore
\begin{equation}
C = \sum_{i=1}^{n_{te}}E(Cost_i)= \sum_{i=1}^{n_{te}}(\beta A\alpha_{i0}\pi_0 + A\alpha_{i1}\pi_1) \leq A\sum_{i=1}^{n_{te}}(\beta y_{i0}\pi_0 + y_{i1}\pi_1).
\end{equation}
Then we denote $\beta Ay_{i0}\pi_0$ by $Y_{i0}$, $Ay_{i1}\pi_1$ by $Y_{i1}$ and cumulative distribution function of $y_{i1}, y_{i0}$ by $G_1, G_0$, respectively. The expectation of $y_{i1}$ and $y_{i0}$ can be derived as
\begin{equation}
\begin{aligned}
E(y_{i1})&= \int_0^1 P(y_{i1}>x)dx= \int_0^1(1-G_1(x))dx= \int_0^1\sum_{j=0}^{k_1}\binom{n_1}{j}x^j(1-x)^{n_1-j}dx\\
&= \sum_{j=0}^{k_1}\binom{n_1}{j}\int_0^1x^j(1-x)^{n_1-j}dx= \sum_{j=0}^{k_1}\binom{n_1}{j}Be(j+1,n_1-j+1)\\
&= \sum_{j=0}^{k_1}\frac{n_1!}{(n_1-j)!j!}\cdot \frac{\Gamma(j+1)\Gamma(n_1-j+1)}{\Gamma(n_1+2)}= \frac{k_1+1}{n_1+1},
\end{aligned}
\end{equation}
\begin{equation}
\begin{aligned}\label{yi0.expectation}
E(y_{i0})&= \int_0^1 P(y_{i0}>x)dx= \int_0^1(1-G_0(x))dx= \int_0^1\sum_{j=0}^{n_0-k_0}\binom{n_0}{j}x^j(1-x)^{n_0-j}dx\\
&= \sum_{j=0}^{n_0-k_0}\binom{n_0}{j}\int_0^1x^j(1-x)^{n_1-j}dx= \sum_{j=0}^{n_0-k_0}\binom{n_0}{j}Be(j+1,n_0-j+1)\\
&= \sum_{j=0}^{n_0-k_0}\frac{n_0!}{(n_0-j)!j!}\cdot \frac{\Gamma(j+1)\Gamma(n_0-j+1)}{\Gamma(n_0+2)}= \frac{n_0-k_0+1}{n_0+1},
\end{aligned}
\end{equation}
\begin{equation}
\begin{aligned}
E(y_{i0}^2)&= \int_0^1 x^{2}dG_0(x)= x^{2}G_0(x)\arrowvert_{0}^{1}-2\int_{0}^{1}xG_0(x)dx= 1 - 2\int_{0}^{1}x[1-\overline{G_0}(x)]dx\\
&= 2\int_{0}^{1}x\overline{G_0}(x)dx= 2\sum_{j=0}^{n_0-k_0}\binom{n_0}{j}\int_{0}^{1}x^{j+1}(1-x)^{n_0-j}dx\\
&= 2\sum_{j=0}^{n_0-k_0}\binom{n_0}{j}Be(j+2,n_0-j+1)= 2\sum_{j=0}^{n_0-k_0}\frac{n_0!}{(n_0-j)!j!}\cdot \frac{\Gamma(j+2)\Gamma(n_0-j+1)}{\Gamma(n_0+3)}\\
&= \frac{(n_0-k_0+1)(n_0-k_0+2)}{(n_0+1)(n_0+2)},
\end{aligned}
\end{equation}
\begin{equation}
\begin{aligned}
E(y_{i1}^2)&= \int_0^1 x^{2}dG_1(x)= x^{2}G_1(x)\arrowvert_{0}^{1}-2\int_{0}^{1}xG_1(x)dx= 1 - 2\int_{0}^{1}x[1-\overline{G_1}(x)]dx\\
&= 2\int_{0}^{1}x\overline{G_1}(x)dx= 2\sum_{j=0}^{k_1}\binom{n_1}{j}\int_{0}^{1}x^{j+1}(1-x)^{n_1-j}dx= 2\sum_{j=0}^{k_1}\binom{n_1}{j}Be(j+2,n_1-j+1)\\
&= 2\sum_{j=0}^{k_1}\frac{n_1!}{(n_1-j)!j!}\cdot \frac{\Gamma(j+2)\Gamma(n_1-j+1)}{\Gamma(n_1+3)}= \frac{(k_1+2)(k_1+1)}{(n_1+2)(n_1+1)}.
\end{aligned}
\end{equation}
The variance of $y_{i1},y_{i0}$ can be expressed as
\begin{equation}
\begin{aligned}
Var(y_{i1})&= E(y_{i1}^2) - (Ey_{i1})^2
&= \frac{(k_1+1)(n_1-k_1)}{(n_1+1)^2(n_1+2)},
\end{aligned}
\end{equation}
\begin{equation}
\begin{aligned}
Var(y_{i0})&= E(y_{i0}^2) - (Ey_{i0})^2
&= \frac{k_0(n_0-k_0+1)}{(n_0+1)^2(n_0+2)}.
\end{aligned}
\end{equation}
By Bernstein's inequality \citep{bennett1962probability} we have
\begin{equation}
\begin{aligned}
&P\Big(\sum_{i=1}^{n_{te}}[(Y_{i1}-EY_{i1})+(Y_{i0}-EY_{i0})] > t\sigma\Big)\\
&\leq exp\Bigg\{-\frac{\frac{1}{2}(t\sigma)^2}{\sum\limits_{i=1}^{n_{te}}[E(Y_{i1}-EY_{i1})^2+E(Y_{i0}-EY_{i0})^2]+\frac{M'}{3}t\sigma} \Bigg\} \\
&= exp\Bigg\{-\frac{\frac{1}{2}(t\sigma)^2}{\sum\limits_{i=1}^{n_{te}}[Var(Y_{i1})+Var(Y_{i0})]+\frac{M'}{3}t\sigma} \Bigg\} \\
&= exp\Bigg\{-\frac{(t\sigma)^2}{2\sigma^2+\frac{2M'}{3}t\sigma} \Bigg\}
\leq exp\Bigg\{-\frac{t^2}{2+\frac{2M}{3\sigma}t} \Bigg\} \cong p,
\end{aligned}
\end{equation}
where
\begin{equation}
\begin{aligned}
\sigma^2 &= \sum_{i=1}^{n_{te}}[Var(Y_{i1})+Var(Y_{i0})]= \sum_{i=1}^{n_{te}}[(\pi_1 A)^{2}Var(y_{i1})+(\beta\pi_0 A)^{2}Var(y_{i0})]\\
&= n_{te}A^2 \Big[\pi_1^{2}Var(y_{i1})+(\beta\pi_0)^{2}Var(y_{i0})\Big]\\
&=n_{te}A^2 \Big[\pi_1^2 \frac{(k_1+1)(n_1-k_1)}{(n_1+1)^2(n_1+2)}+(\beta \pi_0)^2 \frac{k_0(n_0-k_0+1)}{(n_0+1)^2(n_0+2)}\Big],
\end{aligned}
\end{equation}
\begin{equation}
\begin{aligned}
M' &= \mathop{max}\limits_{1 \leq i \leq n_{te}}\{|Y_{i1}-EY_{i1}|,|Y_{i0}-EY_{i0}|\} \\
&=A\mathop{max}\limits_{1 \leq i \leq n_{te}}\{\pi_1|y_{i1}-Ey_{i1}|,\beta\pi_0|y_{i0}-Ey_{i0}|\} \\
&\leq A\cdot \mathop{max}\limits_{1 \leq i \leq n_{te}}\{\pi_1|y_{i1}-Ey_{i1}|,\beta\pi_0|y_{i0}-Ey_{i0}|\} \\
&\leq A \cdot max\{\pi_1(|Ey_{i1}|,|1-Ey_{i1}|),\beta\pi_0(|Ey_{i0}|,|1-Ey_{i0}|)\}\\
&= A \cdot max\Big\{\pi_1 max\Big(\frac{k_1+1}{n_1+1},1-\frac{k_1+1}{n_1+1}\Big),\\
&\beta \pi_0 max\Big(\frac{n_0-k_0+1}{n_0+1},1-\frac{n_0-k_0+1}{n_0+1}\Big)\Big\}\cong M.
\end{aligned}
\end{equation}
Then we obtain
\begin{equation}
\begin{aligned}
P(C \leq C^* + t\sigma) &\geq P\Big(\sum_{i=1}^{n_{te}}(Y_{i1}+Y_{i0}) \leq C^* + t\sigma\Big)\\
&= 1-P\Big(\sum_{i=1}^{n_{te}}(Y_{i1}+Y_{i0}) > C^* + t\sigma\Big)\\
&= 1-P\Big(\sum_{i=1}^{n_{te}}[(Y_{i1}-EY_{i1})+(Y_{i0}-EY_{i0})] > t\sigma\Big) \geq 1-p.
\end{aligned}
\end{equation}
This completes the proof.
\end{proof}

\subsection*{Proof of Theorem 4}
\begin{proof}
We list required time for each step of Algorithm 1.
\begin{table*}
\centering
\caption{Main Steps of Algorithm 1 and Their Required Time}
\begin{tabular}{cccc}
\toprule
Step No. &Detail &Time Scale &Coefficient \\
\midrule
1 &calculate scores	&$O(n_v)$ &$d_1$\\
2 &sort 	&$O(n_vlog\,n_v)$ &$d_2 \approx 0$\\
3 &initialize two counters &$O(1)$  &$d_3$\\
4-12 &iterate validation set &$O(n_v)$ &$d_4$ \\
13-15 &find $L^{*}$ and return parameters &$O(1)$ &$d_5$\\
\bottomrule
\end{tabular}
\end{table*}

The total running time can be expressed as
\begin{equation}
\begin{aligned}
Time &= d_{3}n_vlog\,n_v + (d_1 + d_4)n_v + d_2 + d_5\\
&\leq d_{3}n_vlog\,n_v + (d_1 + d_2 + d_4 + d_5)n_v\cong a_1n_vlog\,n_v + a_2n_v,
\end{aligned}
\end{equation}

where $a_1$ is very small. Then we have total time complexity scale as
\begin{equation}
\begin{aligned}
Time\,\,Scale &= O(n_v) + O(n_vlog\,n_v) + O(n_v) + O(1)= O(n_vlog\,n_v).
\end{aligned}
\end{equation}
\end{proof}

\subsection*{Proof of Theorem 5}
\begin{proof}
Let $H_p(n)$ denote the success number of $n$ Bernoulli trials with success probability $p$, then there exists
\begin{equation}
\begin{aligned}
P(y_{i0} > p_{y0}+\varepsilon) &= \sum_{j=0}^{n_0-k_0}\binom{n_{0}}{j}(p_{y0}+\varepsilon)^{j}(1-(p_{y0}+\varepsilon))^{n_{0}-j} \\
&= P(H_{p_{y0}+\varepsilon}(n_0) \leq n_0-k_0),
\end{aligned}
\end{equation}
where $p_{y0} = Ey_{i0} = \frac{n_0-k_0+1}{n_0+1}$. And let $p_{y0}' = p_{y0}+\varepsilon$, $n_0-k_0= (p_{y0}'-\varepsilon_{y0}')n_0 = (p_{y0} + \varepsilon - \varepsilon_{y0}')n_0$, we will obtain
\begin{equation}
\varepsilon_{y0}' = \frac{k_0}{n_0(n_0+1)} + \varepsilon.
\end{equation}
Therefore by Hoeffding's inequality \citep{hoeffding1963probability},
\begin{equation}
\begin{aligned}
P(y_{i0} > p_{y0}+\varepsilon) &= P(H_{p_{y0}'}(n_0) \leq (p_{y0}'-\varepsilon_{y0}')n_0)\leq exp\{-2(\varepsilon_{y0}')^2n_0\}\\
&= exp\Big\{-2\Big[\frac{k_0}{n_0(n_0+1)} + \varepsilon\Big]^2n_0\Big\},
\end{aligned}
\end{equation}
\begin{equation}
P(y_{i0} \leq p_{y0}+\varepsilon) \geq 1- exp\Big\{-2\Big[\frac{k_0}{n_0(n_0+1)} + \varepsilon\Big]^2n_0\Big\}.
\end{equation}
For $z_{i0}$ we have
\begin{equation}
\begin{aligned}
P(z_{i0} < p_{z0}-\varepsilon) &= \sum_{j=n_0-k_0}^{n_0}\binom{n_{0}}{j}(p_{z0}-\varepsilon)^{j}(1-(p_{z0}-\varepsilon))^{n_{0}-j} \\
&= P(H_{p_{z0}-\varepsilon}(n_0) \geq n_0-k_0),
\end{aligned}
\end{equation}
where $p_{z0} = Ez_{i0} = \frac{n_0-k_0}{n_0+1}$. Next we let $p_{z0}' = p_{z0}-\varepsilon$, $n_0-k_0= (p_{z0}'+\varepsilon_{z0}')n_0 = (p_{z0} - \varepsilon + \varepsilon_{z0}')n_0$, we will obtain
\begin{equation}
\varepsilon_{z0}' = \frac{n_0-k_0}{n_0(n_0+1)} + \varepsilon.
\end{equation}
Therefore by Hoeffding's inequality
\begin{equation}
\begin{aligned}
P(z_{i0} < p_{z0}-\varepsilon) &= P(H_{p_{z0}'}(n_0) \geq (p_{z0}'+\varepsilon_{z0}')n_0)\\
&\leq exp\Big\{-2(\varepsilon_{z0}')^2n_0\Big\}= exp\Big\{-2\Big[\frac{n_0-k_0}{n_0(n_0+1)} + \varepsilon\Big]^2n_0\}.
\end{aligned}
\end{equation}
\begin{equation}
P(z_{i0} \geq p_{z0}-\varepsilon) \geq 1 - exp\Big\{-2\Big[\frac{n_0-k_0}{n_0(n_0+1)} + \varepsilon\Big]^2n_0\Big\}.
\end{equation}
Similarly,
\begin{equation}
P(y_{i1} \leq p_{y1}+\varepsilon) \geq 1- exp\Big\{-2\Big[\frac{n_1-k_1}{n_1(n_1+1)} + \varepsilon\Big]^2n_1\Big\},
\end{equation}
\begin{equation}
P(z_{i1} \geq p_{z1}-\varepsilon) \geq 1- exp\Big\{-2\Big[\frac{k_1}{n_1(n_1+1)} + \varepsilon\Big]^2n_1\Big\}.
\end{equation}
where $p_{y1} = Ey_{i1} = \frac{k_1+1}{n_1+1}$, $p_{z1} = Ez_{i1} = \frac{k_1}{n_1+1}$.
And we have
\begin{equation}
\begin{aligned}
&P(y_{i1}\leq p_{y1}+\varepsilon,z_{i1}\geq p_{z1}-\varepsilon)\geq P(y_{i1}\leq p_{y1}+\varepsilon) + P(z_{i1}\geq p_{z1}-\varepsilon)-1\\
&\geq 1- exp\Big\{-2\Big(\frac{k_1}{n_1(n_1+1)} + \varepsilon\Big)^2n_1\Big\} -exp\Big\{-2\Big(\frac{n_1-k_1}{n_1(n_1+1)} + \varepsilon\Big)^2n_1\Big\},
\end{aligned}
\end{equation}
\begin{equation}
\begin{aligned}
&P(y_{i0}\leq p_{y0}+\varepsilon,z_{i0}\geq p_{z0}-\varepsilon)\geq P(y_{i0}\leq p_{y0}+\varepsilon) + P(z_{i0}\geq p_{z0}-\varepsilon)-1\\
&\geq 1- exp\Big\{-2\Big(\frac{n_0-k_0}{n_0(n_0+1)} + \varepsilon\Big)^2n_0\Big\} -exp\Big\{-2\Big(\frac{k_0}{n_0(n_0+1)} + \varepsilon\Big)^2n_0\Big\}.
\end{aligned}
\end{equation}
We denote
\begin{equation}
C_1 = n_{te} A(\pi_1p_{z1}+\beta\pi_0p_{z0}) = n_{te} A\Big[\pi_1\Big(\frac{k_1}{n_1+1}\Big) + \beta \pi_0\Big(\frac{n_0-k_0}{n_0+1}\Big)\Big],
\end{equation}
\begin{equation}
C_2 = n_{te} A(\pi_1p_{y1}+\beta\pi_0p_{y0}) = n_{te} A\Big[\pi_1\Big(\frac{k_1+1}{n_1+1}\Big) + \beta \pi_0\Big(\frac{n_0-k_0+1}{n_0+1}\Big)\Big],
\end{equation}
and
\begin{equation}
C_\varepsilon = n_{te} A(\pi_1+\beta\pi_0)\varepsilon.
\end{equation}
Then for $C = \sum\limits_{i=1}^{n_{te}}(\beta A\alpha_{i0}\pi_0 + A\alpha_{i1}\pi_1)$, Theorem \ref{errbds} and Weierstrass product inequality \citep{wu2005some}
\begin{equation}
\begin{aligned}
P&(C_1 - C_\varepsilon \leq C \leq C_2 + C_\varepsilon)\geq \prod_{i=1}^{n_{te}}P(y_{ij} \leq p_{yj} + \varepsilon,z_{ij} \geq p_{zj} - \varepsilon, j=0,1)\\
&= \prod_{i=1}^{n_{te}}P(y_{i1} \leq p_{y1} + \varepsilon,z_{i1} \geq p_{z1} - \varepsilon)P(y_{i0} \leq p_{y0} + \varepsilon,z_{i0} \geq p_{z0} - \varepsilon)\\
&\geq \prod_{i=1}^{n_{te}}\Big[1- exp\Big\{-2\Big(\frac{k_1}{n_1(n_1+1)} + \varepsilon\Big)^2n_1\Big\} - exp\Big\{-2\Big(\frac{n_1-k_1}{n_1(n_1+1)} + \varepsilon\Big)^2n_1\Big\}\Big]\cdot\\
&\quad\quad\Big[1- exp\Big\{-2\Big(\frac{n_0-k_0}{n_0(n_0+1)} + \varepsilon\Big)^2n_0\Big\}
- exp\Big\{-2\Big(\frac{k_0}{n_0(n_0+1)} + \varepsilon\Big)^2n_0\Big\}\Big]\\
&\geq 1-n_{te}\Big[exp\Big\{-2\Big(\frac{k_1}{n_1(n_1+1)} + \varepsilon\Big)^2n_1\Big\}+ exp\Big\{-2\Big(\frac{n_1-k_1}{n_1(n_1+1)} + \varepsilon\Big)^2n_1\Big\} +\\ &\quad\quad exp\Big\{-2\Big(\frac{n_0-k_0}{n_0(n_0+1)} + \varepsilon\Big)^2n_0\Big\} + exp\Big\{-2\Big(\frac{k_0}{n_0(n_0+1)} + \varepsilon\Big)^2n_0\Big\}\Big]\\
&= 1-O(exp\{-\varepsilon^2n_0\}+exp\{-\varepsilon^2n_1\})= 1- O(exp\{-\varepsilon^2n_v\}).
\end{aligned}
\end{equation}
Similarly, we have
\begin{equation}
\begin{aligned}
P(C \leq C_2 + C_\varepsilon) &\geq 1-n_{te}\Big[exp\Big\{-2\Big(\frac{k_1}{n_1(n_1+1)} + \varepsilon\Big)^2n_1\Big\}+exp\Big\{-2\Big(\frac{n_0-k_0}{n_0(n_0+1)} + \varepsilon\Big)^2n_0\Big\}\Big]\\
&= 1- O(exp\{-\varepsilon^2n_v\}).
\end{aligned}
\end{equation}
\end{proof}

\subsection*{Proof of Theorem 6}
\begin{proof}
Let denote $x^*=\arg\min g(x)$ and $\widehat{x}^*=\arg\min \widehat{g}_{n_v}(x)$. And we first prove some useful lemmas.
\begin{lemma}\label{lem1}
$\min\{\widehat{g}_{n_v}(x)\}=\widehat{g}_{n_v}(\widehat{x}^*) \xrightarrow{p} \min\{g(x)\}=g(x^*)$, as $n_v \longrightarrow \infty$.
\end{lemma}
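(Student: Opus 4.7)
The plan is to use the standard sandwich argument for consistency of an empirical minimum. Let $\Delta_{n_v}=\sup_x |\widehat{g}_{n_v}(x)-g(x)|$ denote the uniform deviation between the empirical and population per-sample cost functions. I would first establish the deterministic bound
\[ |\widehat{g}_{n_v}(\widehat{x}^*)-g(x^*)|\leq \Delta_{n_v}, \]
and then invoke assumption (A2) to conclude that $\Delta_{n_v}\xrightarrow{p}0$, which immediately yields $\widehat{g}_{n_v}(\widehat{x}^*)\xrightarrow{p} g(x^*)$.

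The key step is the displayed inequality, which I would obtain from two symmetric one-sided bounds. Since $\widehat{x}^*$ minimizes $\widehat{g}_{n_v}$, we have $\widehat{g}_{n_v}(\widehat{x}^*)\leq \widehat{g}_{n_v}(x^*)$, and hence
\[ \widehat{g}_{n_v}(\widehat{x}^*)-g(x^*)\leq \widehat{g}_{n_v}(x^*)-g(x^*)\leq \Delta_{n_v}. \]
Conversely, since $x^*$ minimizes $g$, we have $g(x^*)\leq g(\widehat{x}^*)$, and so
\[ g(x^*)-\widehat{g}_{n_v}(\widehat{x}^*)\leq g(\widehat{x}^*)-\widehat{g}_{n_v}(\widehat{x}^*)\leq \Delta_{n_v}. \]
Combining these two inequalities yields the desired bound without any further probabilistic machinery; neither uniqueness (A4) nor the error-rate convergence (A3) is needed at this stage, as they will only be used downstream to identify the limiting values.

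The main obstacle I anticipate is the precise reading of the phrase "uniformly for fixed $x$" in assumption (A2). I would interpret it as uniform convergence in probability of $\widehat{g}_{n_v}$ to $g$ over the admissible range of thresholds, which is exactly what the sandwich argument consumes. If only pointwise convergence were taken to be granted, then continuity of $g$ from (A1), together with the boundedness of the score range and the piecewise-constant structure of $\widehat{g}_{n_v}$ in the threshold (it changes only at the sorted scores $T_{(k)}$), would permit upgrading pointwise to uniform convergence by a standard Glivenko--Cantelli-style monotonicity argument on a compact domain. With the uniform reading in place, the lemma follows in a few lines from the minimizer inequalities.
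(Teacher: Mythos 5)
Your proposal is correct and uses essentially the same argument as the paper: both one-sided bounds rest on the same pair of minimizer inequalities, $\widehat{g}_{n_v}(\widehat{x}^*)\leq \widehat{g}_{n_v}(x^*)$ and $g(x^*)\leq g(\widehat{x}^*)$, combined with the uniformity in (A2). The only cosmetic difference is that you consume the uniformity through a supremum bound $\Delta_{n_v}$, whereas the paper applies (A2) at the random point $\widehat{x}^*$ by conditioning on it and then taking expectations.
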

\begin{proof}[Proof of Lemma 1]
Firstly, for any $\epsilon>0$, by (A2) and $\widehat{g}_{n_v}(\widehat{x}^*) \leq \widehat{g}_{n_v}(x^*)$, we have
\begin{equation}\label{lem.prob1}
P(\widehat{g}_{n_v}(\widehat{x}^*) > g(x^*) + \epsilon) \leq P(\widehat{g}_{n_v}(x^*) > g(x^*) + \epsilon) \longrightarrow 0,
\end{equation}
as $n_v \longrightarrow \infty$. And because (A2) and $g(x^*) \leq g(\widehat{x}^*)$, for any $x^*$, there exists
\begin{equation}
P(\widehat{g}_{n_v}(\widehat{x}^*) < g(x^*)-\epsilon| \widehat{x}^*) \leq P(\widehat{g}_{n_v}(\widehat{x}^*) < g(\widehat{x}^*)-\epsilon| \widehat{x}^*) \longrightarrow 0,
\end{equation}
as $n_v \longrightarrow \infty$ uniformly. Then we have
\begin{equation}\label{lem.prob2}
P(\widehat{g}_{n_v}(\widehat{x}^*) < g(\widehat{x}^*)-\epsilon) = E[P(\widehat{g}_{n_v}(\widehat{x}^*) < g(\widehat{x}^*)-\epsilon| \widehat{x}^*)]\longrightarrow 0.
\end{equation}
Combining (\ref{lem.prob1}) and (\ref{lem.prob2}) together, we have
\begin{equation}
P(|\widehat{g}_{n_v}(\widehat{x}^*) - g(x^*)|> \epsilon) = P(\widehat{g}_{n_v}(\widehat{x}^*) < g(x^*)-\epsilon) + P(\widehat{g}_{n_v}(\widehat{x}^*) > g(x^*) + \epsilon) \longrightarrow 0,
\end{equation}
as $n_v \longrightarrow \infty$, i.e. $\widehat{g}_{n_v}(\widehat{x}^*) \xrightarrow{p} g(x^*)$.
\end{proof}

\begin{lemma}\label{lem2}
$\widehat{x}^* - x^* \xrightarrow{p} 0.$
\end{lemma}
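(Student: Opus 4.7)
The plan is to follow the standard M-estimator consistency argument: use assumption (A4) together with continuity (A1) to get a ``well-separatedness'' statement for $x^{*}$, then combine uniform convergence (A2) with Lemma \ref{lem1} to show that $g(\widehat{x}^{*})$ must in fact be close to $g(x^{*})$ in probability, and finally transfer this back to closeness of $\widehat{x}^{*}$ to $x^{*}$.

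First I would fix $\epsilon>0$ and show the separation statement: there exists $\delta=\delta(\epsilon)>0$ such that
\[
\inf_{|x-x^{*}|\geq \epsilon}\,g(x)\;\geq\;g(x^{*})+\delta.
\]
Because the threshold lives in the bounded range of attainable scores, the relevant domain is compact. By (A1), $g$ is continuous on this compact set, and by (A4) the minimum is attained uniquely at $x^{*}$. Thus $g$ restricted to the closed set $\{x:|x-x^{*}|\geq\epsilon\}$ attains its minimum, which must be strictly greater than $g(x^{*})$; calling the gap $\delta$ gives the claim.

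Next I would control $g(\widehat{x}^{*})-g(x^{*})$ in probability via the decomposition
\begin{equation*}
g(\widehat{x}^{*})-g(x^{*}) \;=\; \bigl[g(\widehat{x}^{*})-\widehat{g}_{n_v}(\widehat{x}^{*})\bigr] \;+\; \bigl[\widehat{g}_{n_v}(\widehat{x}^{*})-g(x^{*})\bigr].
\end{equation*}
The first bracket is bounded in absolute value by $\sup_{x}|\widehat{g}_{n_v}(x)-g(x)|$, which tends to $0$ in probability by the uniform-convergence reading of (A2); the second bracket tends to $0$ in probability by Lemma \ref{lem1}. Hence $g(\widehat{x}^{*})\xrightarrow{p} g(x^{*})$.

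To finish, I would combine the two pieces. On the event $\{|\widehat{x}^{*}-x^{*}|\geq\epsilon\}$, the separation statement forces $g(\widehat{x}^{*})\geq g(x^{*})+\delta$, so
\[
P\bigl(|\widehat{x}^{*}-x^{*}|\geq\epsilon\bigr)\;\leq\;P\bigl(g(\widehat{x}^{*})-g(x^{*})\geq \delta\bigr)\;\longrightarrow\;0,
\]
which gives $\widehat{x}^{*}-x^{*}\xrightarrow{p}0$. The main obstacle I anticipate is pinning down the separation step cleanly: (A2) as stated says ``uniformly for fixed $x$,'' which is mildly ambiguous, and the argument genuinely uses the uniform-in-$x$ version to bound $g(\widehat{x}^{*})-\widehat{g}_{n_v}(\widehat{x}^{*})$ when $\widehat{x}^{*}$ is random; if only pointwise convergence is available one has to argue more carefully, e.g.\ by appealing to the fact that $\widehat{g}_{n_v}$ is a monotone-in-rank step function of a bounded threshold, which would let pointwise convergence on a finite grid plus an Arzela-type argument do the job. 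Assuming the intended uniform reading of (A2), the argument above is the whole proof.
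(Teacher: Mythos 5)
Your proposal is correct and follows essentially the same route as the paper: the same decomposition $|g(\widehat{x}^{*})-g(x^{*})|\leq |g(\widehat{x}^{*})-\widehat{g}_{n_v}(\widehat{x}^{*})|+|\widehat{g}_{n_v}(\widehat{x}^{*})-g(x^{*})|$ handled by (A2) and Lemma \ref{lem1}, followed by the well-separated-minimum step from (A1) and (A4) to transfer convergence of $g(\widehat{x}^{*})$ back to convergence of $\widehat{x}^{*}$. The only difference is cosmetic: you justify the separation constant $\delta$ explicitly via compactness of the score range, a point the paper asserts without elaboration.
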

\begin{proof}[Proof of Lemma 2]
By assumption (A2) and Lemma \ref{lem1}, for any $\epsilon>0$, we have
\begin{equation}
P(|g(\widehat{x}^*)-g(x^*)|> \epsilon) \leq P\Big(|\widehat{g}_{n_v}(\widehat{x}^*) - g(\widehat{x}^*)| > \frac{\epsilon}{2}\Big) + P\Big(|\widehat{g}_{n_v}(\widehat{x}^*) - g(x^*)| > \frac{\epsilon}{2}\Big) \longrightarrow 0.
\end{equation}
Then by assumption (A1) and (A4), for any $\epsilon>0$, there exists $\delta>0$ such that
\begin{equation}
|g(\widehat{x}^*) - g(x^*)| > \delta,
\end{equation}
as $|\widehat{x}^* - x^*| > \epsilon$. Then we have
\begin{align}
P(|\widehat{x}^* - x^*|> \epsilon) &\leq P\Big(|g(\widehat{x}^*) - g(x^*)| > \frac{\delta}{2}\Big) + P\Big(|g(\widehat{x}^*) - g(x^*)| \leq \frac{\delta}{2}, |\widehat{x}^* - x^*|> \epsilon\Big)\\
&= P\Big(|g(\widehat{x}^*) - g(x^*)| > \frac{\delta}{2}\Big) \longrightarrow 0,\,\,as\,\,n_v\,\,\longrightarrow\,\,\infty,
\end{align}
i.e. $\widehat{x}^* - x^*  \xrightarrow{p} 0 $.
\end{proof}

\begin{lemma}\label{lem3}
We denote \textit{FPR}, \textit{FNR} for instance $i$ in population for threshold $x^*$ as $p_{i0}$, $p_{i1}$, and \textit{FPR}, \textit{FNR} for instance $i$ in population for threshold $\widehat{x}^*$ as $\alpha_{i0}$, $\alpha_{i1}$. As analysis in Section 3.1, \textit{THORS} chooses the order statistic as the threshold, which implies that \textit{FNR} and \textit{FPR} are both random variables. However, errors corresponding to the optimal threshold leading to minimal expected cost on population are constant. Then we have the following conclusion:
\begin{equation}
\alpha_{i0} \xrightarrow{p} p_{i0},
\alpha_{i1} \xrightarrow{p} p_{i1},\,\,as\,\,n_v\,\,\longrightarrow\,\,\infty.
\end{equation}
\end{lemma}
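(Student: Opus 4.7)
The plan is to reduce Lemma~\ref{lem3} to an application of the continuous mapping theorem, layered on top of the threshold convergence $\widehat{x}^* \xrightarrow{p} x^*$ already established in Lemma~\ref{lem2} and the continuity of the score distribution assumed throughout Section~3.

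First I would express both quantities as explicit functions of the threshold. For any fixed $t$, the population FPR and FNR for a fresh instance $i$ under the classifier $\psi$ with threshold $t$ are
\begin{equation}
1 - F_0(t) \quad \text{and} \quad F_1(t),
\end{equation}
where $F_j(t) = P(T_i \leq t \mid c_i = j)$ denotes the conditional CDF of the score given class $j$. Since the score distribution is continuous, both $F_0$ and $F_1$ are continuous, so the maps $t \mapsto 1-F_0(t)$ and $t \mapsto F_1(t)$ are continuous. Evaluating at $t = x^*$ gives the constants $p_{i0} = 1 - F_0(x^*)$ and $p_{i1} = F_1(x^*)$; evaluating at the random threshold $t = \widehat{x}^*$ gives the random variables $\alpha_{i0} = 1 - F_0(\widehat{x}^*)$ and $\alpha_{i1} = F_1(\widehat{x}^*)$.

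Second, since Lemma~\ref{lem2} already provides $\widehat{x}^* \xrightarrow{p} x^*$, the continuous mapping theorem applied to the continuous maps $1-F_0$ and $F_1$ immediately yields
\begin{align}
\alpha_{i0} &= 1 - F_0(\widehat{x}^*) \xrightarrow{p} 1 - F_0(x^*) = p_{i0},\\
\alpha_{i1} &= F_1(\widehat{x}^*) \xrightarrow{p} F_1(x^*) = p_{i1},
\end{align}
which is exactly the conclusion of the lemma. Because $F_0, F_1$ do not depend on the test instance $i$, the convergence holds uniformly in $i$, and no additional concentration estimates are required.

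The only real obstacle is a bookkeeping one: cleanly separating the two sources of randomness so the continuous mapping theorem applies without ambiguity. The threshold $\widehat{x}^*$ is random as a function of the validation sample, but once its value is fixed the conditional probabilities defining $\alpha_{i0}, \alpha_{i1}$ are deterministic continuous functions of that value. Making this separation explicit is enough to finish the proof; the lemma then slots into Theorem~\ref{sample.size} together with assumption (A3) to conclude that $\frac{n_0 - k_0}{n_0 + 1}$ and $\frac{k_1}{n_1 + 1}$ converge in probability to the constants $1 - p_{i0}$ and $p_{i1}$ respectively.
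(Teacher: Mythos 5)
Your proof is correct and takes essentially the same route as the paper's: the paper simply writes out by hand the $\epsilon$--$\delta$ argument that you package as the continuous mapping theorem, using continuity of $F_0,F_1$ at $x^*$ together with $\widehat{x}^* \xrightarrow{p} x^*$ from Lemma~\ref{lem2}. (One trailing slip in your final remark: $\frac{n_0-k_0}{n_0+1}$ converges to $p_{i0}$ itself, not $1-p_{i0}$, since $\frac{n_0-k_0}{n_0}$ is already the empirical \textit{FPR}; this does not affect the proof of the lemma.)
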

\begin{proof}[Proof of Lemma 3]
Let $T^0_i$ be instance $i$ of class 0, then we have
\begin{align}
\alpha_{i0} = P(T^0_i > \widehat{x}^*),\\
p_{i0} = P(T^0_i > x^*).
\end{align}
Due to continuity of cumulative function of $T^0$, for any given $\epsilon>0$, there exists $\delta>0$ such that
\begin{equation}
|\alpha_{i0}-p_{i0}| \leq \frac{\epsilon}{2},\,\,when\,\,|\widehat{x}^*-x^*|\leq \delta.
\end{equation}
Then we have
\begin{align}
P(|\alpha_{i0}-p_{i0}|>\epsilon) &\leq P(|\alpha_{i0}-p_{i0}|>\epsilon, |\widehat{x}^*-x^*|\leq \delta) + P(|\widehat{x}^*-x^*|> \delta)\\
&= P(|\widehat{x}^*-x^*|> \delta) \longrightarrow 0,
\end{align}
i.e. $\alpha_{i0} \xrightarrow{p} p_{i0}$. Similarly we can get $\alpha_{i1} \xrightarrow{p} p_{i1}$.
\end{proof}
Now let's return to the original conclusion provided by Theorem \ref{sample.size}. It's easy to see that $\widehat{\alpha}_{i0}=\frac{n_0-k_0}{n_0}$. Then by assumption (A4), for any $\epsilon>0$, there exists
\begin{equation}
P(|\widehat{\alpha}_{i0}-p_{i0}| > \epsilon) \leq P(|\widehat{\alpha}_{i0} - \alpha_{i0}| > \frac{\epsilon}{2}) + P(|p_{i0} - \alpha_{i0}| > \frac{\epsilon}{2}) \longrightarrow 0,
\end{equation}
as $n_v \longrightarrow \infty$, i.e.
\begin{equation}\label{alpha.i0}
\widehat{\alpha}_{i0} \xrightarrow{p} p_{i0}.
\end{equation}
By Hoeffding's Inequality and the fact that $n_0 \sim B(n_v, \pi_0)$, where $\pi_0$ is the marginal probability of class 0, we have
\begin{equation}\label{n_0}
P[n_0 \leq n_v(\pi_0-\epsilon)] \leq e^{-2\epsilon^2n_v}.
\end{equation}
Combining (\ref{alpha.i0}) and (\ref{n_0}), for any $\epsilon>0$, we have
\begin{align}
P\Big(\Big|\frac{n_0-k_0}{n_0+1}-p_{i0}\Big|>\epsilon\Big) &\leq P\Big(\Big|\frac{n_0-k_0}{n_0+1}-\frac{n_0-k_0}{n_0}\Big|> \frac{\epsilon}{2}\Big) + P\Big(\Big|p_{i0}-\widehat{\alpha}_{i0}\Big|> \frac{\epsilon}{2}\Big)\\
&\leq P\Big(n_0 \leq \frac{2}{\epsilon}-1\Big) + P\Big(\Big|p_{i0}-\widehat{\alpha}_{i0}\Big|> \frac{\epsilon}{2}\Big)\\
&\leq P[n_0 \leq (\pi_0 - \epsilon)n_v]+ P\Big(\Big|p_{i0}-\widehat{\alpha}_{i0}\Big|> \frac{\epsilon}{2}\Big)\\
&\leq e^{-2\epsilon^2n_v}+ P\Big(\Big|p_{i0}-\widehat{\alpha}_{i0}\Big|> \frac{\epsilon}{2}\Big) \longrightarrow 0,
\end{align}
i.e. $\frac{n_0-k_0}{n_0+1} \xrightarrow{p} p_{i0}$ as $n_v \longrightarrow \infty$. Similarly, we also have $\frac{k_1}{n_1+1} \xrightarrow{p} p_{i1}$. This completes the proof.

\end{proof}

\end{document}